\newcommand{\UU}{{\mathcal{U}}}
\newcommand{\vect}[1]{\mathbf{#1}}
\newcommand{\transp}[1]{{#1}^{\mathsf{T}}}
\newcommand{\dotproduct}[2]{\transp{#1} {#2}}
\renewcommand{\exp}[1]{e^{#1}}
\newcommand{\E}[2]{\ifthenelse{\equal{#1}{}}{\mbox{E} \left[ #2\right]}{\mbox{E} \left[ #2  \middle\vert #1 \right]}}
\newcommand{\Eb}[2]{\ifthenelse{\equal{#1}{}}{\mbox{E}_{b} \left[ #2\right]}{\mbox{E}_{b} \left[ #2  \middle\vert #1 \right]}}
\newcommand{\given}{\vert}
\newcommand{\St}{\mathcal{S}}
\newcommand{\Ac}{\mathcal{A}}
\newcommand{\expR}{\mathcal{R}}
\newcommand{\obj}{J_\gamma}
\newcommand{\cp}{\vect{v}}
\newcommand{\ap}{\vect{u}}
\newcommand{\x}{\vect{x}}
\newcommand{\e}{\vect{e}}
\newcommand{\w}{\vect{w}}
\newcommand{\Ncp}{{N_{\cp}}}
\newcommand{\Nap}{{N_{\ap}}}
\newcommand{\Gret}{R^{\lambda}}
\newcommand{\deltaret}{\delta^{\lambda}}
\newcommand{\comb}{\vect{z}}
\newcommand{\vfa}{{\hat{V}}}
\newcommand{\comp}{\psi}
\newcommand{\qfa}{{\hat{Q}}}
\newcommand{\grad}[2]{\nabla_{#2} #1}
\newcommand{\gradt}[2]{{\bf g}({#2})}
\newcommand{\gradobj}{\grad{\obj(\ap)}{\ap}}
\newcommand{\gradtobj}{{\bf g}(\ap)}
\newcommand{\conv}{\mathcal{Z}}
\newcommand{\conva}{\tilde{\mathcal{Z}}}
\newcommand{\convhat}{\hat{\mathcal{Z}}}
\newcommand{\grada}{\gradt{\obj}{\ap}}
\newcommand{\seeerrata}{\footnote{See errata in section~\ref{errata}}}
\newcommand{\opac}{Off-PAC\xspace}
\title{Off-Policy Actor--Critic}
\begin{document}

\twocolumn[
\icmltitle{Off-Policy Actor-Critic}

\icmlauthor{Thomas Degris}{thomas.degris@inria.fr}
\icmladdress{Flowers Team, INRIA, Talence, ENSTA-ParisTech, Paris, France}
\icmlauthor{Martha White}{whitem@cs.ualberta.ca}
\icmlauthor{Richard S. Sutton}{sutton@cs.ualberta.ca}
\icmladdress{RLAI Laboratory, Department of Computing Science, University of Alberta, Edmonton, Canada}

\icmlkeywords{reinforcement learning}

\vskip 0.3in
]

\begin{abstract}
This paper presents the first actor-critic algorithm for off-policy reinforcement learning. Our algorithm is online and incremental, and its per-time-step complexity scales linearly with the number of learned weights. Previous work on actor-critic algorithms is limited to the on-policy setting and does not take advantage of the recent advances in off-policy gradient temporal-difference learning. Off-policy techniques, such as Greedy-GQ, enable a target policy to be learned while following and obtaining data from another (behavior) policy. For many problems, however, actor-critic methods are more practical than action value methods (like Greedy-GQ) because they explicitly represent the policy; consequently, the policy can be stochastic and utilize a large action space. In this paper, we illustrate how to practically combine the generality and learning potential of off-policy learning with the flexibility in action selection given by actor-critic methods. We derive an incremental, linear time and space complexity algorithm that includes eligibility traces, prove convergence under assumptions similar to previous off-policy algorithms\seeerrata, and empirically show better or comparable performance to existing algorithms on standard reinforcement-learning benchmark problems. 
\end{abstract}

The reinforcement learning framework is a general temporal learning formalism that has, over the last few decades, seen a marked growth in algorithms and applications. Until recently, however, practical online methods with convergence guarantees have been restricted to the \emph{on-policy} setting, in which the agent learns only about the policy it is executing. 

In an \emph{off-policy} setting, on the other hand, an agent learns about a policy or policies different from the one it is executing. Off-policy methods have a wider range of applications and learning possibilities. Unlike on-policy methods, off-policy methods are able to, for example, learn about an optimal policy while executing an exploratory policy (Sutton \& Barto, 1998), learn from demonstration (Smart \& Kaelbling, 2002), and learn multiple tasks in parallel from a single sensorimotor interaction with an environment (Sutton et al., 2011). Because of this generality, off-policy methods are of great interest in many application domains. 

The most well known off-policy method is Q-learning (Watkins \& Dayan, 1992).
However, while Q-Learning is guaranteed to converge to the optimal policy for the tabular (non-approximate) case, it may diverge when using linear function approximation (Baird, 1995). Least-squares methods such as LSTD (Bradtke \& Barto, 1996) and LSPI (Lagoudakis \& Parr, 2003) can be used off-policy and are sound with linear function approximation, but are computationally expensive; their complexity scales quadratically with the number of features and weights.
Recently, these problems have been addressed by the new family of gradient-TD (Temporal Difference) methods (e.g., Sutton et al., 2009), such as Greedy-GQ (Maei et al., 2010), which are of linear complexity and convergent under off-policy training with function approximation. 


All action-value methods, including gradient-TD methods such as Greedy-GQ, suffer from three important limitations. First, their target policies are deterministic, whereas many problems have stochastic optimal policies, such as in adversarial settings or in partially observable Markov decision processes. Second, finding the greedy action with respect to the action-value function becomes problematic for larger action spaces. Finally, a small change in the action-value function can cause large changes in the policy, which creates difficulties for convergence proofs and for some real-time applications.

The standard way of avoiding the limitations of action-value methods is to use policy-gradient algorithms (Sutton et al., 2000) such as actor-critic methods (e.g., Bhatnagar et al., 2009). For example, the natural actor-critic, an on-policy policy-gradient algorithm, has been successful for learning in continuous action spaces in several robotics applications (Peters \& Schaal, 2008).

The first and main contribution of this paper is to introduce the first actor-critic method that can be applied off-policy, which we call \opac, for Off-Policy Actor--Critic.
\opac has two learners: the actor and the critic. The actor updates the policy weights. The critic learns an off-policy estimate of the value function for the current actor policy, different from the (fixed) behavior policy. This estimate is then used by the actor to update the policy. For the critic, in this paper we consider a version of \opac that uses GTD($\lambda$) (Maei, 2011), a gradient-TD method with eligibitity traces for learning state-value functions.
We define a new objective for our policy weights and derive a valid backward-view update using eligibility traces. The time and space complexity of \opac is linear in the number of learned weights. 

The second contribution of this paper is an off-policy policy-gradient theorem and a convergence proof for \opac when $\lambda = 0$, under assumptions similar to previous off-policy gradient-TD proofs.\seeerrata 

Our third contribution is an empirical comparison of Q($\lambda$), Greedy-GQ, \opac, and a soft-max version of Greedy-GQ that we call Softmax-GQ, on three benchmark problems in an off-policy setting. To the best of our knowledge, this paper is the first to provide an empirical evaluation of gradient-TD methods for off-policy control (the closest known prior work is the work of Delp (2011)). We show that \opac outperforms other algorithms on these problems.


\section{Notation and Problem Setting}
\label{sec:notation}

In this paper, we consider Markov decision processes with a discrete state space $\St$, a discrete action space $\Ac$, a distribution $P: \St \times \St \times \Ac \ra [0,1]$, where $P(s'|s,a)$ is the probability of transitioning into state $s'$ from state $s$ after taking action $a$, and an expected reward function $\expR:\St \times \Ac \times \St \ra \Real$ that provides an expected reward for taking action $a$ in state $s$ and transitioning into $s'$. We observe a stream of data, which includes states $s_t \in \St$, actions $a_t \in \Ac$, and rewards $r_t \in \Real$ for $t=1,2,\ldots$ with actions selected from a fixed behavior policy, $b(a|s) \in (0,1]$.

Given a termination condition $\gamma:\St \rightarrow [0,1]$ (Sutton et al., 2011), we define the value function for $\pi:\St \times \Ac \rightarrow (0,1]$ to be:
\begin{align}
V^{\pi, \gamma}(s) = \E{s_t=s}{r_{t+1} + \ldots + r_{t+T}} \ \forall s \in \St\label{eq:valuefunction} 
\end{align}
where policy~$\pi$ is followed from time step $t$ and terminates at time $t+T$ according to $\gamma$. We assume termination always occurs in a finite number of steps.

The action-value function, $Q^{\pi, \gamma}(s,a)$, is defined as:
\begin{align}
Q^{\pi, \gamma}&(s,a) = \nonumber \\ 
&\sum_{s' \in \St} P(s' \given s,a) [ \expR(s,a,s') + \gamma(s') V^{\pi, \gamma}(s') ] \label{eq:actionvalue}
\end{align}
for all $\ a \in \Ac$ and for all $s \in \St$. Note that $V^{\pi,\gamma}(s)=\sum_{a \in \Ac} \pi(a|s) Q^{\pi,\gamma}(s,a)$, for all $s \in \St$.

The policy $\pi_\ap: \Ac \times \St \ra [0,1]$ is an arbitrary, differentiable function of a weight vector, $\ap \in \Real^{N_\ap}$, $N_\ap \in \Natural$, with $\pi_\ap(a|s) > 0$ for all $s \in \St$, $a \in \Ac$. Our aim is to choose $\ap$ so as to maximize the following scalar objective function:
\begin{eqnarray}
\obj(\ap) & = & \sum_{s \in \St} d^{b}(s) V^{\pi_\ap,\gamma}(s)
\label{eq:objfun}
\end{eqnarray}
where $d^{b}(s) = \lim_{t\rightarrow \infty}P(s_t=s \given s_0,b)$ is the limiting distribution of states under $b$ and $P(s_t=s \given s_0,b)$ is the probability that $s_t=s$ when starting in $s_0$ and executing~$b$. The objective function is weighted by $d^b$ because, in the off-policy setting, data is obtained according to this behavior distribution. For simplicity of notation, we will write $\pi$ and implicitly mean $\pi_\ap$. 

\section{The \opac Algorithm}
\label{sec:algorithm}

In this section, we present the \opac algorithm in three steps. First, we explain the basic theoretical ideas underlying the gradient-TD methods used in the critic. Second, we present our off-policy version of the policy-gradient theorem. Finally, we derive the forward view of the actor and convert it to a backward view to produce a complete mechanistic algorithm using eligibility traces.

\subsection{The Critic: Policy Evaluation}
\label{sec:critic}

Evaluating a policy $\pi$ consists of learning its value function, $V^{\pi, \gamma}(s)$, as defined in Equation~\ref{eq:valuefunction}. Since it is often impractical to explicitly represent every state $s$, we learn a linear approximation of $V^{\pi, \gamma}(s)$: $\vfa(s) = \dotproduct{\cp}{\x_s}$ where $\x_s \in \Real^{\Ncp}$, $\Ncp \in \Natural$, is the feature vector of the state $s$, and $\cp \in \Real^{\Ncp}$ is another weight vector. 

Gradient-TD methods (Sutton et al., 2009) incrementally learn the weights, $\cp$, in an off-policy setting, with a guarantee of stability and a linear per-time-step complexity. These methods minimize the $\lambda$-weighted mean-squared projected Bellman error:
\begin{equation*}
\mbox{MSPBE}(\cp)=||\vfa - \Pi T^{\lambda, \gamma}_{\pi} \vfa||_{{D}}^2
\end{equation*}
where $\vfa = X \cp$; ${X}$ is the matrix whose rows are all $\x_s$; $\lambda$ is the decay of the eligibility trace; $D$ is a matrix with $d^{b}(s)$ on its diagonal; $\Pi$ is a projection operator that projects a value function to the nearest representable value function given the function approximator; and $T^{\lambda, \gamma}_{\pi}$ is the $\lambda$-weighted Bellman operator for the target policy $\pi$ with termination probability $\gamma$ (e.g., see Maei \& Sutton, 2010). For a linear representation, $\Pi = {X} (\transp{{X}} {D} {X})^{-1} \transp{{X}} {D}$. 

In this paper, we consider the version of \opac that updates its critic weights by the GTD($\lambda$) algorithm introduced by Maei (2011).

\newcommand{\st}{s_t}
\newcommand{\at}{a_t}
\newcommand{\stp}{s_{t+1}}
\newcommand{\rtp}{r_{t+1}}

\subsection{Off-policy Policy-gradient Theorem}
\label{ref:policyimprovement}


Like other policy gradient algorithms, \opac updates the weights approximately in proportion to the gradient of the objective:
\begin{equation}
\ap_{t+1}  - \ap_t \approx \alpha_{u,t} \nabla_{\ap}\obj(\ap_t) \label{eqn:gradobj}
\end{equation}
where $\alpha_{u,t} \in \Real$ is a positive step-size parameter. Starting from Equation \ref{eq:objfun}, the gradient can be written:
\begin{align*}
\gradobj & = \grad{\left[ \sum_{s \in \St} d^{b}(s) \sum_{a \in \Ac} \pi(a \given s) Q^{\pi, \gamma}(s,a)\right]}{\ap} \nonumber\\
& = \sum_{s \in \St} d^{b}(s) \sum_{a \in \Ac} \left[ \grad{\pi(a \given s)}{\ap} Q^{\pi, \gamma}(s,a) \right. \nonumber \\ 
& \hspace{3cm} + \pi(a \given s) \grad{Q^{\pi, \gamma}(s,a)}{\ap} \left. \right]
\end{align*}
The final term in this equation, $\grad{Q^{\pi, \gamma}(s,a)}{\ap}$, is difficult to estimate in an incremental off-policy setting. The first approximation involved in the theory of \opac is to omit this term. That is, we work with an approximation to the gradient, which we denote $\gradtobj\in\Real^\Nap$, defined by
\begin{equation}
\label{eq:g}
\gradobj\approx\gradtobj=\sum_{s \in \St} d^{b}(s) \sum_{a \in \Ac} \grad{\pi(a|s)}{\ap} Q^{\pi,\gamma}(s,a)
\end{equation}
%
%
%
The two theorems below provide justification for this approximation\seeerrata.


\begin{theorem}[Policy Improvement]\label{thm:policyimprovement}
Given any policy parameter $\ap$, let 
\[
\ap' = \ap + \alpha \, \gradtobj
\]
Then there exists an $\epsilon>0$ such that, for all positive $\alpha<\epsilon$, 
\begin{align*}
\obj(\ap') \ge \obj(\ap)
\end{align*}
Further, if $\pi$ has a tabular representation (i.e., separate weights for each state), then
$V^{\pi_{\ap'},\gamma}(s) \ge V^{\pi_{\ap},\gamma}(s)$ for all $s \in \St$.
\end{theorem}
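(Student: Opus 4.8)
The plan is to prove the two assertions by separate routes, since they have different character. The stronger, pointwise value inequality (tabular case) I would reduce to the classical policy-improvement theorem of dynamic programming, applied block-by-block over states; the objective inequality in the general function-approximation case I would attack through a first-order expansion of $\obj$ along the update direction. I would do the tabular case first because it is clean and self-contained, and because it reveals exactly why the general case is delicate.

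For the tabular case, exploit that each state $s$ owns a private block $\ap_s$ of the weight vector, so $\gradtobj$ decouples across states: its $\ap_s$-block is $d^{b}(s)\sum_a \grad{\pi(a|s)}{\ap_s}\,Q^{\pi,\gamma}(s,a)$, which is exactly $d^{b}(s)$ times the gradient of $f_s(\ap_s) := \sum_a \pi_{\ap_s}(a|s)\,Q^{\pi,\gamma}(s,a)$ taken with $Q^{\pi,\gamma}$ frozen at the old policy. Since $d^{b}(s)\ge 0$, the step $\ap_s' = \ap_s + \alpha\,d^{b}(s)\grad{f_s}{\ap_s}$ moves $\ap_s$ along a nonnegative multiple of $\grad{f_s}{\ap_s}$, so by the mean value theorem and continuity of $\grad{f_s}{\ap_s}$ there is a threshold $\epsilon_s>0$ below which $f_s$ does not decrease; states with $d^{b}(s)=0$ or vanishing gradient contribute no constraint, and finiteness of $\St$ lets me take a single $\epsilon = \min_s \epsilon_s > 0$. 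This yields $\sum_a \pi_{\ap'}(a|s)\,Q^{\pi_\ap,\gamma}(s,a)\ge V^{\pi_\ap,\gamma}(s)$ for every $s$, which is precisely the hypothesis of the policy-improvement theorem (with $\gamma(s')$ playing the role of a state-dependent discount, and termination in finitely many steps guaranteeing that the usual telescoping substitution converges). That theorem then gives $V^{\pi_{\ap'},\gamma}(s)\ge V^{\pi_\ap,\gamma}(s)$ for all $s$, and weighting by $d^{b}$ recovers $\obj(\ap')\ge\obj(\ap)$ as a corollary.

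For the general case I would Taylor-expand along the update, obtaining
\[
\obj(\ap') - \obj(\ap) = \alpha\,\transp{(\gradobj)}\,\gradtobj + o(\alpha),
\]
so that, $\obj$ being continuously differentiable in $\ap$, it suffices to show $\gradtobj$ is an ascent direction, i.e.\ the directional derivative $\transp{(\gradobj)}\,\gradtobj$ is nonnegative (positivity gives a strict increase for small $\alpha$; the degenerate case where it vanishes with $\gradtobj\ne 0$ would need higher-order information but is not the crux). Writing $\gradobj = \gradtobj + \vect{c}$ with $\vect{c}=\sum_{s}d^{b}(s)\sum_a \pi(a|s)\grad{Q^{\pi,\gamma}(s,a)}{\ap}$ the term dropped in Equation~\ref{eq:g}, this directional derivative equals $\|\gradtobj\|^2 + \transp{\vect{c}}\,\gradtobj$. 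Here lies the main obstacle: there is no a priori reason the omitted gradient-of-$Q$ contribution $\vect{c}$ is aligned with $\gradtobj$, so the cross term $\transp{\vect{c}}\,\gradtobj$ can be negative and $\gradtobj$ need not be an ascent direction once weights are shared across states. In the tabular case this difficulty evaporates, because the per-state decoupling lets the policy-improvement theorem absorb the effect of $\vect{c}$ exactly; with genuine function approximation I expect controlling this cross term to require additional structural assumptions or to force a weaker conclusion, which is the subtle point the errata footnote flags.
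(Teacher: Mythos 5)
Your tabular argument is correct and is, in substance, the paper's own proof of that half: the paper likewise freezes $Q^{\pi_\ap,\gamma}$, notes that with separate weights per state the update can only increase $\sum_{a}\pi_{\ap'}(a|s)Q^{\pi_\ap,\gamma}(s,a)$ state by state, and then unrolls the Bellman equation exactly as in Sutton and Barto's policy-improvement theorem. Your block decomposition of $\gradtobj$ into $d^b(s)\,\grad{f_s}{\ap_s}$ and the mean-value/minimum-over-states argument simply make rigorous the step the paper waves through as ``for an appropriate step size,'' and deriving $\obj(\ap')\ge\obj(\ap)$ as a $d^b$-weighted corollary of the pointwise inequality is exactly how the tabular case closes.

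The divergence is in the general function-approximation half, and there you are right to stop rather than push through. The paper's proof of the unrestricted claim asserts the $d^b$-weighted improvement with frozen $Q$ (the aggregate analogue of your per-state inequality, which does follow from a small gradient step, since $\gradtobj$ is the gradient of $\ap'\mapsto\sum_s d^b(s)\sum_a \pi_{\ap'}(a|s)Q^{\pi_\ap,\gamma}(s,a)$ at $\ap'=\ap$) and then performs the same Bellman unrolling as in the tabular case. That unrolling is invalid with shared weights: after one expansion, the improvement is needed at successor states weighted by $d^b(s)P(s'|s,a)$, whereas the aggregate inequality supplies it only under the weighting $d^b(s')$. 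This is precisely the mistake the paper itself concedes in Section~\ref{errata}: the middle inequality in the proof of Theorem~\ref{thm:policyimprovement} ``might not hold'' unless the representation is tabular (or $b=\pi$). Your cross-term diagnosis, $\transp{(\gradobj)}\,\gradtobj = \|\gradtobj\|^2 + \transp{\vect{c}}\,\gradtobj$ with the sign of $\transp{\vect{c}}\,\gradtobj$ uncontrolled, is a first-order restatement of the same obstruction that the errata phrases as a mismatch between $d^b$ and the transition kernel. So the one ``gap'' in your proposal---the unproved general case---is a defect of the theorem as stated, not of your argument: you prove exactly the part of the statement that is actually true, whereas the paper's proof of the remainder is unsound.
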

(Proof in Appendix$^3$).

In the conventional on-policy theory of policy-gradient methods, the policy-gradient theorem (Marbach \& Tsitsiklis, 1998; Sutton et al., 2000) establishes the relationship between the gradient of the objective function and the expected action values. In our notation, that theorem essentially says that our approximation is exact, that $\gradtobj = \gradobj$. Although, we can not show this in the off-policy case, we can establish a relationship between the solutions found using the true and approximate gradient:

\begin{theorem}[Off-Policy Policy-Gradient Theorem]\label{thm:policygradient}
Given $\UU \subset \Real^{\Nap}$ a non-empty, compact set, let 
\begin{align*}
\vspace{-0.5cm}
\conva &= \{ \ap \in \UU \ | \ \grada= 0 \}\\
\conv &= \{\ap \in \UU \ | \ \grad{\obj}{\ap}(\ap) = 0\}
\end{align*} 
where $\conv$ is the true set of local maxima and $\conva$ the set of local maxima obtained from using the approximate gradient, $\grada$. If the value function can be represented by our function class, then $ \conv \subset \conva$.
Moreover, if we use a tabular representation for $\pi$, then $ \conv = \conva$.
\end{theorem}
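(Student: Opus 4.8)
The plan is to reduce both claims to a single statement about the per-state vectors $h(s)=\sum_{a\in\Ac}\grad{\pi(a\given s)}{\ap}\,Q^{\pi,\gamma}(s,a)$, which are precisely the summands of $\grada$ in Equation~\ref{eq:g}. First I would differentiate the identity $V^{\pi,\gamma}(s)=\sum_a\pi(a\given s)Q^{\pi,\gamma}(s,a)$ together with Equation~\ref{eq:actionvalue}. Since $P$ and $\expR$ do not depend on $\ap$, writing $g(s)=\grad{V^{\pi,\gamma}(s)}{\ap}$ gives the linear recursion
\[
g(s)=h(s)+\sum_{s'\in\St}\Big(\sum_{a}\pi(a\given s)P(s'\given s,a)\gamma(s')\Big)g(s').
\]
In matrix form $g=h+P_{\pi}^{\gamma}g$, where $P_{\pi}^{\gamma}$ collects the sub-stochastic weights $\sum_a\pi(a\given s)P(s'\given s,a)\gamma(s')$. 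Because termination is assumed in finitely many steps, $\rho(P_{\pi}^{\gamma})<1$, so $I-P_{\pi}^{\gamma}$ is invertible and $g=(I-P_{\pi}^{\gamma})^{-1}h$. Substituting into Equation~\ref{eq:objfun} yields $\gradtrue=\transp{d^{b}}g=\transp{d_{\pi}}h$, where $\transp{d_{\pi}}=\transp{d^{b}}(I-P_{\pi}^{\gamma})^{-1}$, whereas $\grada=\transp{d^{b}}h$. Thus the true and approximate gradients weight the same family $\{h(s)\}$ by the discounted visitation $d_{\pi}$ versus by $d^{b}$; note $d_{\pi}(s)\ge d^{b}(s)>0$ on every reachable state, since $(I-P_{\pi}^{\gamma})^{-1}=\sum_{k\ge0}(P_{\pi}^{\gamma})^{k}\ge0$ entrywise.

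For the tabular claim $\conv=\conva$ I would exploit a disjoint-block structure. When $\pi$ carries separate weights per state, $\grad{\pi(a\given s)}{\ap}$ is supported on the coordinate block belonging to $s$, so each $h(s)$ lives in its own block and the blocks are mutually orthogonal in $\Real^{\Nap}$. Consequently $\transp{d_{\pi}}h=0$ holds if and only if $d_{\pi}(s)h(s)=0$ for every $s$, and likewise $\transp{d^{b}}h=0$ if and only if $d^{b}(s)h(s)=0$ for every $s$. Since both weightings are strictly positive on reachable states, each condition collapses to the single requirement $h(s)=0$ for all $s$. Therefore $\ap\in\conv$ iff $h\equiv0$ iff $\ap\in\conva$, giving $\conv=\conva$.

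The representability claim $\conv\subset\conva$ is where the argument becomes delicate, and I expect it to be the main obstacle. The block argument is unavailable: with shared features the vectors $h(s)$ overlap, so $\transp{d_{\pi}}h=0$ no longer forces the individual $h(s)$ to vanish, and merely reweighting from $d_{\pi}$ to $d^{b}$ could in principle move $\transp{d^{b}}h$ away from $0$. The plan is to use representability to constrain the family $\{h(s)\}$: if $V^{\pi,\gamma}$ lies in the span of the critic's features, i.e. $V^{\pi,\gamma}=X\cp$, then differentiating through the dependence of $\cp$ on $\ap$ shows that every $g(s)$ is a linear image of the feature vector $\x_s$, so that the matrix $G$ with rows $g(s)$ factors as $G=XM$ for some $M$ built from $\grad{\cp}{\ap}$. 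I would then take $\ap\in\conv$, use $\gradtrue=\transp{d^{b}}XM=0$, and try to deduce $\grada=\transp{d^{b}}(I-P_{\pi}^{\gamma})XM=0$. The crux --- and the step I would verify most carefully, in particular against small hand-built MDPs --- is ruling out the residual cross term $\transp{d^{b}}P_{\pi}^{\gamma}XM$ surviving when $\transp{d^{b}}XM=0$. Showing that the representability hypothesis forces this term to vanish, and thereby controls the omitted quantity $\sum_s d^{b}(s)\sum_a\pi(a\given s)\grad{Q^{\pi,\gamma}(s,a)}{\ap}$ despite the mismatch between the distributions $d_{\pi}$ and $d^{b}$, is exactly what the proof must establish.
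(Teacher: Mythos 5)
Your decomposition $\gradtrue=\transp{d_\pi}h$ versus $\grada=\transp{d^b}h$, where $h(s)=\sum_{a}\grad{\pi(a|s)}{\ap}\,Q^{\pi,\gamma}(s,a)$ and $\transp{d_\pi}=\transp{d^b}(I-P_\pi^\gamma)^{-1}$, is correct, and for the tabular claim it yields a proof that is genuinely different from the paper's. The paper proves the tabular equality in two halves: it shows $\conva\subset\conv$ by a locality argument (if the omitted term were nonzero while the approximate gradient vanished, one could strictly improve the value of a single state by perturbing only that state's action probabilities, a contradiction), and it obtains the reverse inclusion $\conv\subset\conva$ by importing the first claim of the theorem. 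Your argument instead uses the disjoint supports of the $h(s)$ in the tabular case, together with strict positivity of both weightings (via $(I-P_\pi^\gamma)^{-1}\ge I$ entrywise), to collapse both stationarity conditions to the single condition $h\equiv 0$; this gives $\conv=\conva$ in one stroke and, importantly, with no dependence on the theorem's first claim. That independence is a real advantage, as explained next.

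The part you leave open, $\conv\subset\conva$ under representability, is a gap relative to the theorem statement --- but it is exactly the right claim to refuse to prove. The paper's own proof of that inclusion is a contradiction argument: from $\gradtrue(\ap^*)=0$ and $\grada(\ap^*)\neq 0$, Theorem~\ref{thm:policyimprovement} is invoked to conclude $\obj(\ap^*+\alpha\,\grada(\ap^*))>\obj(\ap^*)$, said to contradict stationarity. This does not go through: Theorem~\ref{thm:policyimprovement} gives only a non-strict inequality, a vanishing gradient is in any case compatible with strict increase nearby (the sets are defined by stationarity, not local maximality), and --- decisively --- the errata (Section~\ref{errata}) retracts precisely this claim, retaining only the tabular equality. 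The errata's diagnosis is the same mismatch your algebra isolates: the true gradient weights improvements by the discounted visitation $d_\pi$ while the approximate gradient weights by $d^b$, leaving the residual $\transp{d^b}P_\pi^\gamma(I-P_\pi^\gamma)^{-1}h$ that your sketch correctly flags as the obstruction; representability of $V^{\pi,\gamma}$ alone does not force it to vanish. So your proposal proves what is actually true, by a sound and self-contained route, and the one claim you could not close is the one whose proof in the paper is erroneous.
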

(Proof in Appendix$^3$).

The proof of Theorem \ref{thm:policygradient}, showing that $\conv = \conva$, requires tabular $\pi$ to avoid update overlap: updates to a single parameter influence the action probabilities for only one state. Consequently, both parts of the gradient (one part with the gradient of the policy function and the other with the gradient of the action-value function) locally greedily change the action probabilities for only that one state. Extrapolating from this result, in practice, more generally a local representation for $\pi$ will likely suffice, where parameter updates influence only a small number of states. Similarly, in the non-tabular case, the claim will likely hold if $\gamma$ is small (the return is myopic), again because changes to the policy mostly affect the action-value function locally.

Fortunately, from an optimization perspective, for all $\ap \in \conva \backslash \conv$, $\obj(\ap) < \min_{\ap' \in \conv} \obj(\ap')$, in other words, $\conv$ represents all the largest local maxima in $\conva$ with respect to the objective, $\obj$. Local optimization techniques, like random restarts, should help ensure that we converge to larger maxima and so to $\ap \in \conv$. Even with the true gradient, these approaches would be incorporated into learning because our objective, $\obj$,  is non-convex.




\subsection{The Actor: Incremental Update Algorithm with Eligibility Traces}
\label{sec:algo}
We now derive an incremental update algorithm using observations sampled from the behavior policy. First, we rewrite Equation~\ref{eq:g} as an expectation:
\begin{align*}
\gradtobj & = \E{s \sim d^b}{\sum_{a \in \Ac} \grad{\pi(a|s)}{\ap} Q^{\pi,\gamma}(s,a)}\\
& = \E{s \sim d^b}{\sum_{a \in \Ac} b(a|s) \frac{\pi(a|s)}{b(a|s)} \frac{\grad{\pi(a|s)}{\ap}}{\pi(a|s)} Q^{\pi,\gamma}(s,a)}\\
& = \E{s \sim d^b, a \sim b(\cdot|s)}{\rho(s,a) \comp(s,a) Q^{\pi,\gamma}(s,a)}\\
& = \Eb{}{\rho(s_t,a_t) \comp(s_t,a_t) Q^{\pi,\gamma}(s_t,a_t)}
\end{align*}
where $\rho(s,a) = \frac{\pi(a|s)}{b(a|s)}$, $\comp(s,a)=\frac{\grad{\pi(a|s)}{\ap}}{\pi(a|s)}$, and we introduce the new notation $\Eb{}{\cdot}$ to denote the expectation implicitly conditional on all the random variables (indexed by time step) being drawn from their limiting stationary distribution under the behavior policy. A standard result (e.g., see Sutton et al., 2000) is that an arbitrary function of state can be introduced into these equations as a baseline without changing the expected value. We use the approximate state-value function provided by the critic, $\vfa$, in this way:
\begin{align*}
&\gradtobj = \Eb{}{\rho(s_t,a_t) \comp(s_t,a_t) \left( Q^{\pi,\gamma}(s_t,a_t) - 
\vfa(s_t) \right)}
\end{align*}
The next step is to replace the action value, $Q^{\pi,\gamma}(s_t,a_t)$, by the off-policy $\lambda$-return. Because these are not exactly equal, this step introduces a further approximation:
\begin{align*}
&\gradtobj \approx \widehat\gradtobj = \Eb{}{\rho(s_t,a_t) \comp(s_t,a_t) \left( \Gret_t - 
\vfa(s_t) \right)}
\end{align*}
where the off-policy $\lambda$-return is defined by:
\begin{align*}
&\Gret_t=r_{t+1} + (1 - \lambda) \gamma(s_{t+1}) \vfa(s_{t+1}) \\
&~~~~~~~~~~~~~~+  \lambda \gamma(s_{t+1}) \rho(s_{t+1},a_{t+1}) \Gret_{t+1} 
\end{align*}
Finally, based on this equation, we can write the forward view of \opac:
\begin{align*}
& \ap_{t+1} - \ap_t = \alpha_{u,t} \rho(s_t,a_t) \comp(s_t,a_t) \left( \Gret_t - 
\vfa(s_t) \right)
\end{align*}
The forward view is useful for understanding and analyzing algorithms, but for a mechanistic implementation it must be converted to a backward view that does not involve the $\lambda$-return.
The key step, proved in the appendix, is the observation that
\begin{align}
\Eb{}{\rho(s_t,a_t) \comp(s_t,a_t) \left(\Gret_t - \vfa(s_t) \right)} = \Eb{}{\delta_t \e_{t}} \label{eq:backwardview}
\end{align}
where $\delta_t= r_{t+1} + \gamma(s_{t+1}) \vfa(s_{t+1}) - \vfa(s_t)$ is the conventional temporal difference error, and $\e_{t}\in\Real^\Nap$ is the eligibility trace of $\comp$, updated by:
\begin{equation*}
\e_{t} = \rho(s_t,a_t) \left( \comp(s_t,a_t) + \lambda \e_{t-1} \right)
\label{eq:policyeligibilityupdate}
\end{equation*}
Finally, combining the three previous equations, the backward view of the actor update can be written simply as:
\begin{align*}
& \ap_{t+1} - \ap_t = \alpha_{u,t} \delta_t \e_t
\end{align*}

\comment{
\begin{algorithm}[tb]
   \caption{The \opac algorithm}
   \label{alg:opac}
\begin{algorithmic}
   \STATE Initialize the vectors $\e_{v,0}$, $\e_{u,0}$, and $\w_0$ to zero
   \STATE Initialize the vectors $\cp_0$ and $\ap_0$ arbitrarily
   \STATE On every time step $t$, given $\x_t$, $\at$, $r_{t+1}$, $\x_{t+1}$, and the termination probabilities $\gamma_t$ and $\gamma_{t+1}$, do:
   \STATE $\delta_t = r_{t+1} + \gamma_{t+1} \dotproduct{\cp_t}{\x_{t+1}} - \dotproduct{\cp_t}{\x_t}$
   \STATE $\rho_t = \pi_t(a_t|s_t) / b(a_t|s_t)$
\\Update of the critic (GTD($\lambda$) algorithm):
   \STATE $\e_{v,t+1} = \rho_t \left[ \x_t + \gamma_t \lambda \e_{v,t} \right]$
   \STATE $\cp_{t+1} = \cp_t + \alpha_{v,t} \left[ \delta_t \e_{v,t+1} - \right.$ 
   \STATE \hspace{25mm} $\left. \gamma_{t+1} (1- \lambda)(\dotproduct{\e_{v,t+1}}{\w_t}) \x_t \right]$
   \STATE $\w_{t+1} = \w_t + \alpha_{w,t} \left[ \delta_t \e_{v,t+1} - (\dotproduct{\w_t}{\x_t}) \x_t \right]$
\\Update of the actor:
   \STATE $\e_{u,t+1} = \rho_t \left[ \frac{\grad{\pi_t(a_t|s_t)}{\ap_t}}{\pi_t(a_t|s_t)} + \gamma_t \lambda \e_{u,t} \right]$
   \STATE $\ap_{t+1} = \ap_t + \alpha_{u,t} \delta_t \e_{u,t+1}$
\end{algorithmic}
\end{algorithm}
}
\begin{algorithm}[tb]
   \caption{The \opac algorithm}
   \label{alg:opac}
\begin{algorithmic}
   \STATE Initialize the vectors $\e_{v}$, $\e_{u}$, and $\w$ to zero
   \STATE Initialize the vectors $\cp$ and $\ap$ arbitrarily
   \STATE Initialize the state $s$ 
   \STATE For each step:
   \STATE ~~~~~Choose an action, $a$, according to $b(\cdot|s)$
   \STATE ~~~~~Observe resultant reward, $r$, and next state, $s'$
   \STATE ~~~~~$\delta \leftarrow r + \gamma(s') \dotproduct{\cp}{\x_{s'}} - \dotproduct{\cp}{\x_s}$
   \STATE ~~~~~$\rho \leftarrow \pi_\ap(a|s) / b(a|s)$
   \STATE ~~~~~Update the critic (GTD($\lambda$) algorithm):
   \STATE ~~~~~~~~~~$\e_{v} \leftarrow \rho \left( \x_s + \gamma(s) \lambda \e_{v} \right)$
   \STATE ~~~~~~~~~~$\cp \leftarrow \cp + \alpha_{v} \left[ \delta \e_{v} - \gamma(s') (1- \lambda)(\dotproduct{\w}{\e_{v}}) \x_s \right]$
   \STATE ~~~~~~~~~~$\w \leftarrow \w + \alpha_{w} \left[ \delta \e_v - (\dotproduct{\w}{\x_s}) \x_s \right]$
   \STATE ~~~~~Update the actor:
   \STATE ~~~~~~~~~~$\e_{u} \leftarrow \rho \left[ \frac{\grad{\pi_\ap(a|s)}{\ap}}{\pi_\ap(a|s)} + \gamma(s) \lambda \e_{u} \right]$
   \STATE ~~~~~~~~~~$\ap \leftarrow \ap + \alpha_{u} \delta \e_{u}$
   \STATE ~~~~~$s \leftarrow s'$
\end{algorithmic}
\end{algorithm}

The complete \opac algorithm is given above as Algorithm~\ref{alg:opac}. 
Note that although the algorithm is written in terms of states $s$ and $s'$, it really only ever needs access to the corresponding feature vectors, $\x_s$ and $\x_{s'}$, and to the behavior policy probabilities, $b(\cdot|s)$, for the current state. 
All of these are typically available in large-scale applications with function approximation.
Also note that \opac is fully incremental and has per-time step computation and memory complexity that is linear in the number of weights, $N_\ap + N_\cp$.

With discrete actions, a common policy distribution is the Gibbs distribution, which uses a linear combination of features $\pi(a|s) = \frac{\exp{\dotproduct{\ap}{\phi_{s,a}}}}{\sum_b{\exp{\dotproduct{\ap}{\phi_{s,b}}}}}$ where $\phi_{s,a}$ are state-action features for state $s$, action $a$, and where $\comp(s,a) = \frac{\grad{\pi(a|s)}{\ap}}{\pi(a|s)}=\phi_{s,a} - \sum_b{\pi(b|s)\phi_{s,b}}$. The state-action features, $\phi_{s,a}$, are potentially unrelated to the feature vectors $\x_s$ used in the critic.


\setlength{\abovedisplayskip}{0.1cm}

\section{Convergence Analysis}
\label{sec:analysis}

Our algorithm has the same recursive stochastic form as the off-policy value-function algorithms
\begin{align*}
\ap_{t+1} = \ap_t + \alpha_t(h(\ap_t, \cp_t) + M_{t+1})
\end{align*}
where $h: \Real^N \ra \Real^N$ is a differentiable function
and $\{ M_t \}_{t \ge 0}$ is a noise sequence. 
Following previous off-policy gradient proofs (Maei, 2011), we study the behavior of the ordinary differential equation
\begin{align*}
\dot \ap(t) = \ap(h(\ap(t),\cp))
\end{align*}
The two updates (for the actor and for the critic) are not independent on each time step; we analyze two separate ODEs using a two timescale analysis (Borkar, 2008). The actor update is analyzed given fixed critic parameters, and vice versa, iteratively (until convergence). We make the following assumptions.

\begin{enumerate}[label=(A\arabic*)] 
\item The policy viewed as a function of $\ap$, $\pi_{(\cdot)}(a | s) : \Real^{N_\ap} \ra (0,1]$, is continuously differentiable, $\forall s \in \St, a \in \Ac$. 
\item The update on $\ap_t$ includes a projection operator, $\Gamma: \Real^{N_\ap} \ra \Real^{N_\ap}$, that projects any $\ap$ to a compact set $\UU = \{ \ap \ | \ q_i(\ap) \le 0, i = 1, \ldots, s\} \subset \Real ^{N_\ap}$, where $q_i(\cdot): \Real^{N_\ap} \ra \Real$ are continuously differentiable functions specifying the constraints of the compact region. For $\ap$ on the boundary of $\UU$, the gradients of the active $q_i$ are linearly independent. Assume the compact region is large enough to contain at least one (local) maximum of $\obj$. \label{req:projection}
\item The behavior policy has a minimum positive value $b_{\text{min}} \in (0,1]$: $b(a|s) \ge b_{\text{min}}$ $\forall s \in \St, a \in \Ac$
\item The sequence $(\x_t, \x_{t+1}, r_{t+1})_{t \ge 0}$ is i.i.d. and has uniformly bounded second moments.
\item For every $\ap \in \UU$ (the compact region to which $\ap$ is projected), $V^{\pi,\gamma}: S \ra \Real$ is bounded.\label{req:Alast}
\end{enumerate}

\textbf{Remark 1:} It is difficult to prove the boundedness of the iterates without the projection operator.
Since we have a bounded function (with range $(0,1]$), we could instead assume that the gradient goes to zero exponentially as $\ap \ra \infty$, ensuring boundedness. Previous work, however, has illustrated that the stochasticity in practice makes convergence to an unstable equilibrium unlikely (Pemantle, 1990); therefore, we avoid restrictions on the policy function and do not include the projection in our algorithm 



Finally, we have the following (standard) assumptions on features and step-sizes.
\begin{enumerate}[label=(P\arabic*)] 
\vspace{-0.3cm}
\item $||\x_t||_\infty < \infty, \ \forall t$, where $\x_t \in \Real^{\Ncp}$
\item Matrices $C = E[\x_t \transp{\x_t}]$, $A = E[\x_t(\x_t - \gamma \transp{\x_{t+1})}]$ are non-singular and uniformly bounded. $A$, $C$ and $E[r_{t+1} \x_t]$ are well-defined because the distribution of $(\x_t, \x_{t+1}, r_{t+1})$ does not depend on $t$.
\end{enumerate}

\begin{enumerate}[label=(S\arabic*)] 
\vspace{-0.5cm}
\item $\alpha_{v,t}, \alpha_{w,t}, \alpha_{u,t} > 0, \ \forall t$ are deterministic such that $\sum_t \alpha_{v,t} = \sum_t \alpha_{w,t} = \sum_t \alpha_{u,t} = \infty$ and $\sum_t \alpha_{v,t}^2 < \infty$, $\sum_t \alpha_{w,t}^2 < \infty$ and $\sum_t \alpha_{u,t}^2 < \infty$ with $\frac{\alpha_{u,t}}{\alpha_{v,t}} \ra 0$.
\item Define $ H(A) \doteq (A+ \transp{A})/2$ and let $\lambda_\text{min}(C^{-1} H(A))$ be the minimum eigenvalue of the matrix $C^{-1}H(A)$\footnote{Minimum exists as all eigenvalues real-valued (Lemma \ref{lem:gtd})}. Then $\alpha_{w,t} = \eta \alpha_{v,t}$ for some $ \eta > \max(0, -\lambda_\text{min}(C^{-1} H(A)))$.
\end{enumerate}

\textbf{Remark 2:}  The assumption $\alpha_{u,t}/\alpha_{v,t} \ra 0$ in (S1) states that the actor step-sizes go to zero at a faster rate than the value function step-sizes: the actor update moves on a slower timescale than the critic update (which changes more from its larger step sizes). This timescale is desirable because we effectively want a converged value function estimate for the current policy weights, $\ap_t$. Examples of suitable step sizes are $\alpha_{v,t} = \frac{1}{t}, \ \alpha_{u,t} = \frac{1}{1+t \log t}$ or $\alpha_{v,t} = \frac{1}{t^{2/3}}, \ \alpha_{u,t} = \frac{1}{t}$.
 (with $\alpha_{w,t} = \eta \alpha_{v,t}$ for $\eta$ satisfying (S2)).

The above assumptions are actually quite unrestrictive. Most algorithms inherently assume bounded features with bounded value functions for all policies; unbounded values trivially result in unbounded value function weights. Common policy distributions are smooth, making $\pi(a|s)$ continuously differentiable in $\ap$.  The least practical assumption is that the tuples ($\x_t, \x_{t+1}, r_{t+1})$ are i.i.d., in other words, Martingale noise instead of Markov noise. For Markov noise, our proof as well as the proofs for GTD($\lambda$) and GQ($\lambda$), require Borkar's (2008) two-timescale theory to be extended to Markov noise (which is outside the scope of this paper). Finally, the proof for Theorem \ref{thm:convergence} assumes $\lambda = 0$, but should extend to $\lambda > 0$ similarly to GTD$(\lambda)$ (see Maei, 2011, Section 7.4, for convergence remarks). 

We give a proof sketch of the following convergence theorem\seeerrata, with the full proof in the appendix. 
\begin{theorem}[Convergence of \opac]\label{thm:convergence}
Let $\lambda = 0$ and consider the \opac iterations with GTD(0)\footnote{GTD(0) is GTD($\lambda$) with $\lambda = 0$, not the different algorithm called GTD(0) by Sutton, Szepesvari \& Maei (2008)} for the critic. Assume that (A1)-\ref{req:Alast}, (P1)-(P2) and (S1)-(S2) hold. Then the policy weights, $\ap_t$, converge to $\convhat = \{ u \in \UU \ |\   \widehat\gradtobj = 0\}$ and the value function weights, $\cp_t$, converge to the corresponding TD-solution with probability one. 
\end{theorem}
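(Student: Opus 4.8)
The plan is to use the ODE method for two-timescale stochastic approximation (Borkar, 2008), exactly as in the GTD($\lambda$) and Greedy-GQ proofs. First I would collect the three updates into the joint iterate $(\ap_t,\cp_t,\w_t)$ and, using the step-size condition $\alpha_{u,t}/\alpha_{v,t}\to 0$ from (S1), designate $(\cp_t,\w_t)$ the fast component and $\ap_t$ the slow component. Writing each update as ``mean field $+$ noise'' and using the i.i.d. assumption (A4) together with bounded $\x_t$ and $r_{t+1}$ (P1, A4), bounded value functions (A5), the lower bound $b_{\min}$ on the behavior policy (A3), and the smoothness of $\pi$ in $\ap$ (A1), I would verify that each noise term is a martingale-difference sequence with uniformly bounded conditional second moments (here boundedness of $\rho$ follows from $b\ge b_{\min}$, and boundedness of $\comp$ from A1 with $\pi_\ap$ positive and continuous on the finite $\St\times\Ac$ and compact $\UU$). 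Boundedness of the iterates follows from the projection $\Gamma$ on $\ap$ (A2) together with the stability of the GTD recursion for $(\cp,\w)$, putting the system into the standard form to which Borkar's two-timescale theorem applies.

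Second, the fast timescale: with $\ap$ held quasi-static, the $(\cp,\w)$ subsystem is exactly the GTD(0) recursion for the fixed target policy $\pi_\ap$. Here I would invoke the GTD convergence result (Lemma~\ref{lem:gtd}), whose hypotheses are met by (P2) (non-singular $A$, $C$) and by the choice $\eta>\max(0,-\lambda_{\min}(C^{-1}H(A)))$ in (S2), which renders the associated linear ODE globally asymptotically stable. Thus for each fixed $\ap$ the critic ODE has a unique globally stable equilibrium, the TD-solution $\cp(\ap)$. A separate but essential step is to show that the map $\ap\mapsto\cp(\ap)$ is Lipschitz on $\UU$ — this follows because the relevant matrices depend smoothly on $\ap$ through $\pi_\ap$ (A1) and stay uniformly non-singular on the compact set — since Borkar's theorem requires a Lipschitz equilibrium map to conclude $\cp_t-\cp(\ap_t)\to 0$ almost surely.

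Third, the slow timescale: substituting the tracked critic $\cp(\ap_t)$, the actor's mean field becomes $\Eb{}{\delta_t\e_t}$, which by the backward-view identity (Equation~\ref{eq:backwardview}) with $\lambda=0$ equals $\widehat\gradtobj$ viewed as a function of $\ap$. The actor therefore follows the projected ODE $\dot\ap=\hat\Gamma\bigl(\widehat\gradtobj(\ap)\bigr)$, where $\hat\Gamma$ is the projection term induced by $\Gamma$. Using continuity of $\widehat\gradtobj$ in $\ap$ (from A1 and the Lipschitz map $\cp(\ap)$) and the constraint qualification in (A2) (linear independence of active constraint gradients on $\partial\UU$), I would apply the Kushner--Clark lemma for projected stochastic approximation to conclude that $\ap_t$ converges almost surely to an internally chain transitive invariant set of this ODE. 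Because (A2) guarantees $\UU$ is large enough to contain a local maximum in its interior, where the projection is inactive and the projected field coincides with $\widehat\gradtobj$, the attracting equilibria are exactly the zeros $\convhat=\{u\in\UU \mid \widehat\gradtobj=0\}$; combined with the fast-timescale result, $\cp_t$ then converges to the corresponding TD-solution.

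The main obstacle is the two-timescale coupling rather than either timescale in isolation: making rigorous the claim that the critic tracks the moving target $\cp(\ap_t)$ requires the uniform stability of the fast ODE and the Lipschitz equilibrium map above, and the tracking error $\cp_t-\cp(\ap_t)$ must be controlled uniformly so that it does not corrupt the slow update in the limit. A secondary difficulty is that $\widehat\gradtobj$ is not the exact gradient of any scalar potential (we dropped the $\grad{Q^{\pi,\gamma}(s,a)}{\ap}$ term in Equation~\ref{eq:g} and replaced $Q$ by the $\lambda$-return), so I cannot use $\obj$ as a Lyapunov function to sharpen the conclusion; convergence is therefore established only to the stationary set $\convhat$ via the invariant-set characterization, with Theorems~\ref{thm:policyimprovement} and~\ref{thm:policygradient} supplying the interpretation that these stationary points correspond to the desired (approximate) local maxima.
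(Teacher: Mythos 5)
Your proposal follows essentially the same route as the paper's proof: a two-timescale Borkar analysis in which $(\cp_t,\w_t)$ is the fast GTD(0) iterate converging, for quasi-static $\ap$, to the TD solution (the paper's Lemma~\ref{lem:gtd}), with a Lipschitz equilibrium map $\ap \mapsto \cp(\ap)$ (the paper's Lemma~\ref{lem:cont}), martingale-difference noise with bounded second moments, boundedness via the projection $\Gamma$, and a projected-ODE argument for the slow actor whose mean field is $\widehat\gradtobj$ by the backward-view identity. The paper organizes this as verification of Borkar's seven conditions and cites Lemma~6 of Bhatnagar et al.\ (2009) for the slow-timescale conclusion, which is itself a Kushner--Clark-type argument, so your explicit invocation of Kushner--Clark is not a substantive departure; you even flag the same weakness (no Lyapunov function for $\widehat\gradtobj$, hence no guarantee the invariant set consists of attracting equilibria) that the paper's errata later acknowledges.
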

\vspace{-0.4cm}
\begin{proofsketch}
We follow a similar outline to the two timescale analysis for on-policy policy gradient actor-critic (Bhatnagar et al., 2009) and for nonlinear GTD (Maei et al., 2009). We analyze the dynamics for our two weights, $\ap_t$ and $\transp{\comb_t} = (\transp{\wvec_t} \transp{\cp_t})$, based on our update rules. The proof involves satisfying seven requirements from Borkar (2008, p.~64) to ensure convergence to an asymptotically stable equilibrium. 
\vspace{-0.15cm}
\end{proofsketch}

\section{Empirical Results}
\label{sec:empiricalresults}

This section compares the performance of \opac to three other off-policy algorithms with linear memory and computational complexity: 1) Q($\lambda$) (called Q-Learning when $\lambda=0$), 2) Greedy-GQ (GQ($\lambda$) with a greedy target policy), and 3) Softmax-GQ (GQ($\lambda$) with a Softmax target policy). The policy in \opac is a Gibbs distribution as defined in section~\ref{sec:algo}. 

\newcommand{\na}{\small na}
\newcommand{\algoname}[1]{\tiny{#1}\xspace}
\newcommand{\sm}[1]{${\scriptstyle #1}$}
\newcommand{\asympt}{\tiny{final}\xspace}
\newcommand{\overa}{\tiny{overall}\xspace}

\newcommand{\cl}{@{\hspace{1mm}}r@{\hspace{1mm}}}

We used three benchmarks: mountain car, a pendulum problem and a continuous grid world. These problems all have a discrete action space and a continuous state space, for which we use function approximation. The behavior policy is a uniform distribution over all the possible actions in the problem for each time step. Note that Q($\lambda$) may not be stable in this setting (Baird, 1995), unlike all the other algorithms.

The goal of the mountain car problem (see Sutton \& Barto, 1998) is to drive an underpowered car to the top of a hill. The state of the system is composed of the current position of the car (in $[-1.2, 0.6]$) and its velocity (in $[-.07, .07]$). The car was initialized with a position of -0.5 and a velocity of 0. Actions are a throttle of $\{-1$, $0$, $1\}$. The reward at each time step is~$-1$. An episode ends when the car reaches the top of the hill on the right or after 5,000 time steps.

The second problem is a pendulum problem (Doya, 2000). The state of the system consists of the angle (in radians) and the angular velocity (in $[-78.54, 78.54]$) of the pendulum. Actions, the torque applied to the base, are $\{-2$, $0$, $2\}$. The reward is the cosine of the angle of the pendulum with respect to its fixed base. The pendulum is initialized with an angle and an angular velocity of 0 (i.e., stopped in a horizontal position). An episode ends after 5,000 time steps.

For the pendulum problem, it is unlikely that the behavior policy will explore the optimal region where the pendulum is maintained in a vertical position. Consequently, this experiment illustrates which algorithms make best use of limited behavior samples. 

The last problem is a continuous grid-world. The state is a 2-dimensional position in $[0,1]^2$. The actions are the pairs $\{(0.0, 0.0)$, $(-.05, 0.0)$, $(.05, 0.0)$, $(0.0, -.05)$, $(0.0, .05)\}$, representing moves in both dimensions. Uniform noise in $[-.025,.025]$ is added to each action component. The reward at each time step for arriving in a position $(p_x, p_y)$ is defined as: 
$
-1 + -2 ({\cal N}(p_x, .3, .1) \cdot {\cal N}(p_y, .6, .03) + {\cal N}(p_x,.4, .03) \cdot {\cal N}(p_y, .5, .1) + {\cal N}(p_x, .8, .03) \cdot {\cal N}(p_y, .9, .1))
$ where ${\cal N}(p, \mu, \sigma) = \exp{-\frac{(p - \mu)^2}{2\sigma^2}}/ \sigma \sqrt{2\pi}$. The start position is $(0.2, 0.4)$ and the goal position is $(1.0, 1.0)$. An episode ends when the goal is reached, that is when the distance from the current position to the goal is less than 0.1 (using the L1-norm), or after 5,000 time steps. Figure~\ref{fig:discretecontinusgridworldenv} shows a representation of the problem.

\newcommand{\gridworld}[1]{\frame{\includegraphics[width=.4\linewidth]{#1}}}
\begin{figure}[t]
\centering
\begin{tabular}{cc}
Behavior & Greedy-GQ \\
\gridworld{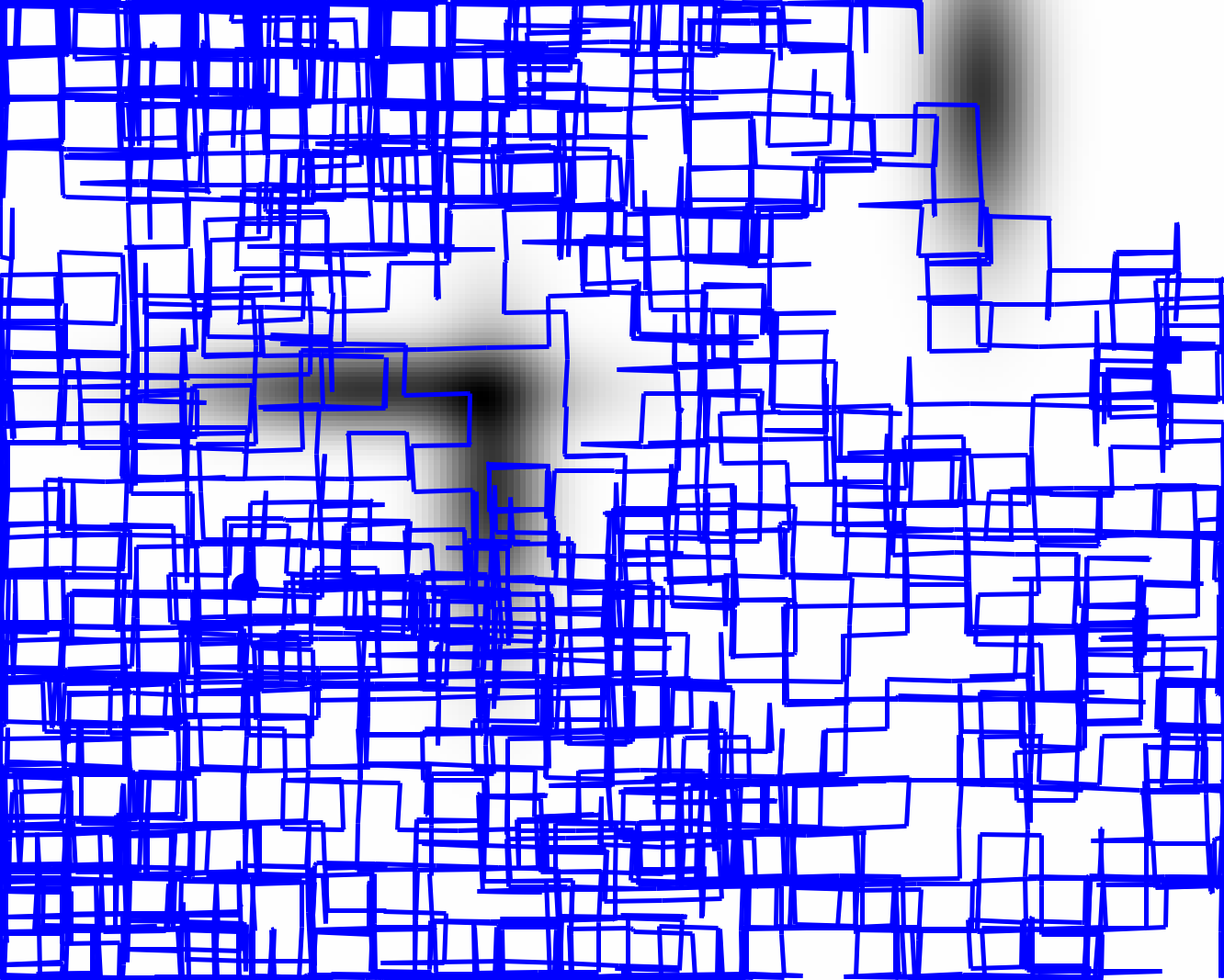} & \gridworld{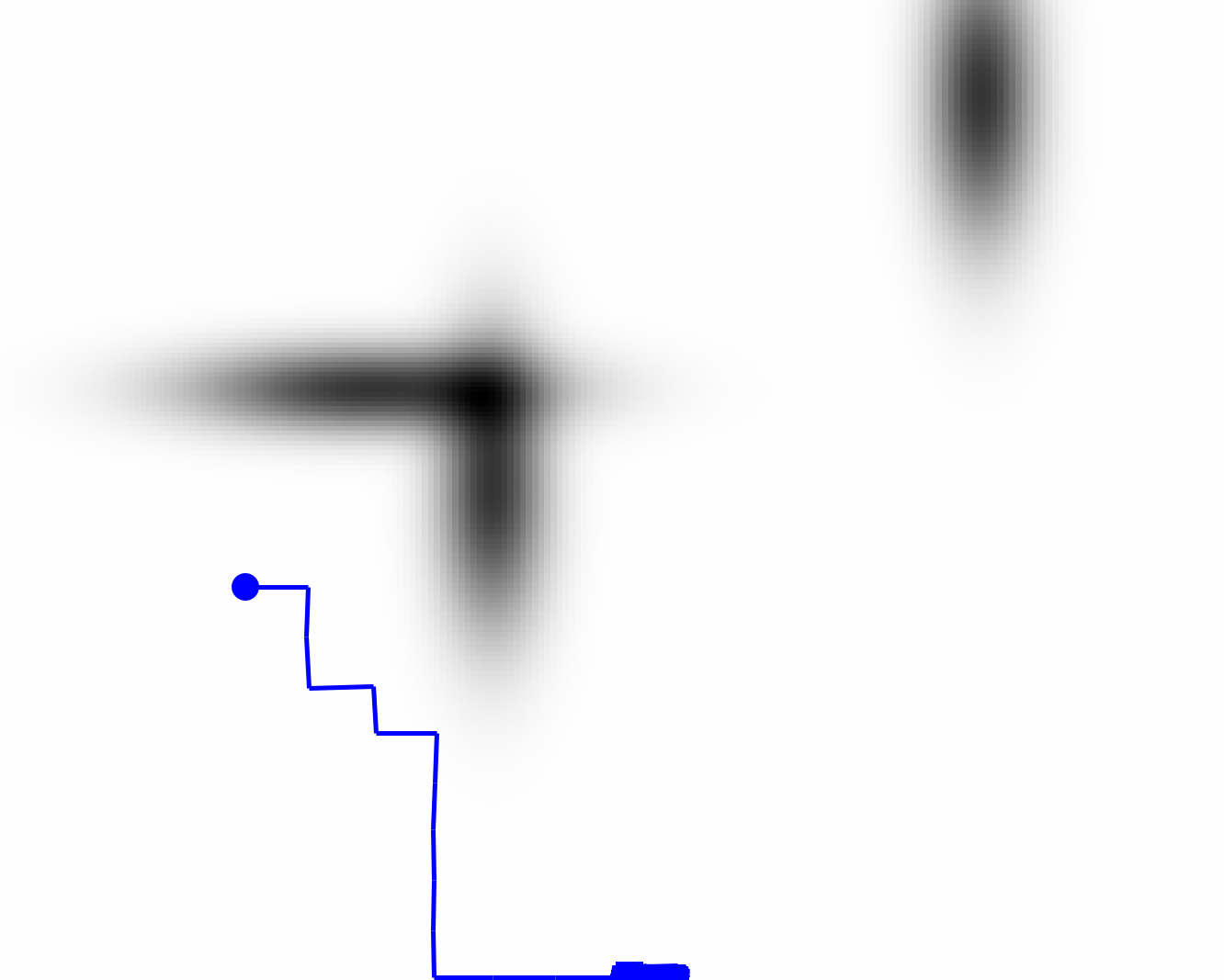} \\
Softmax-GQ & \opac \\
\gridworld{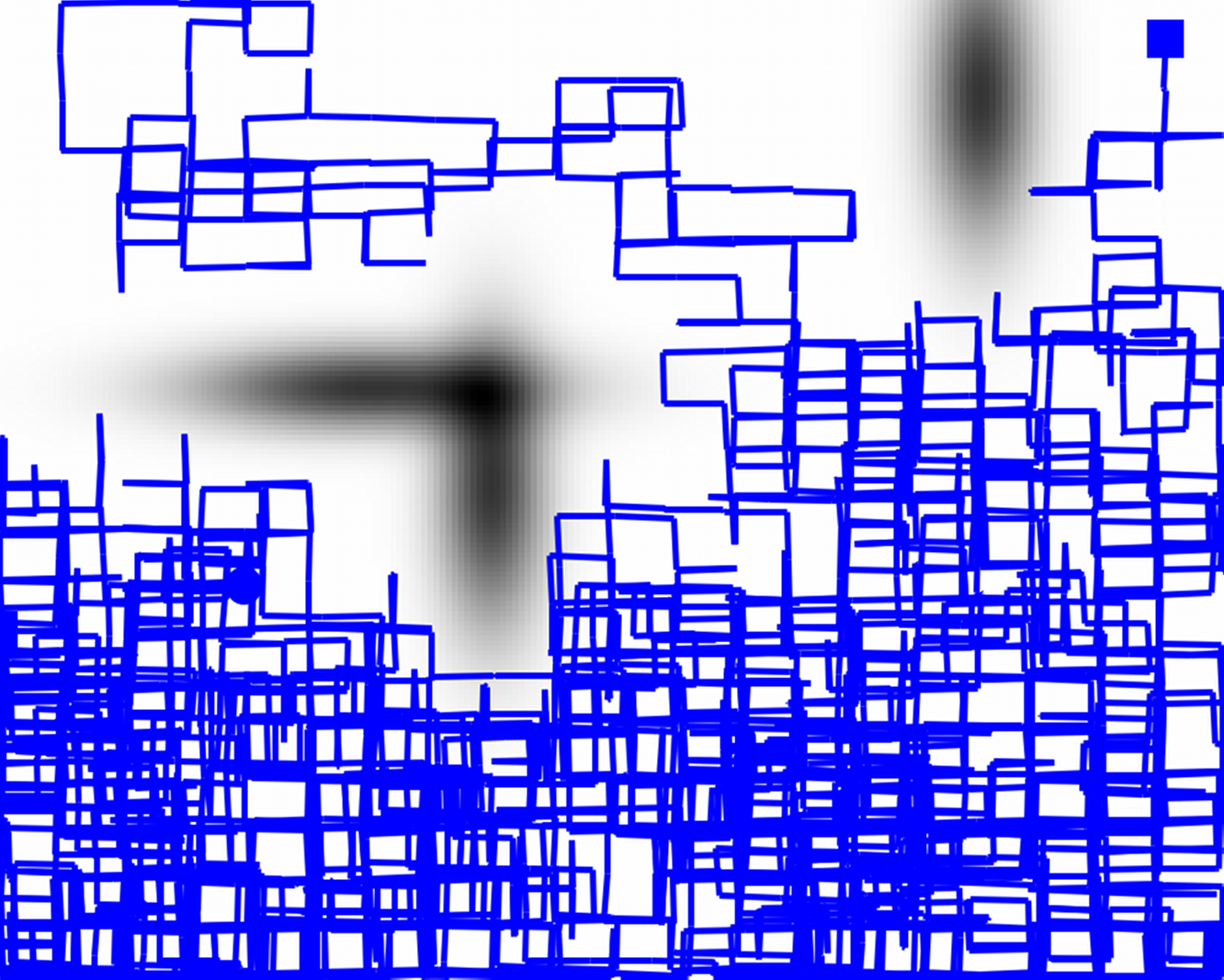} & \gridworld{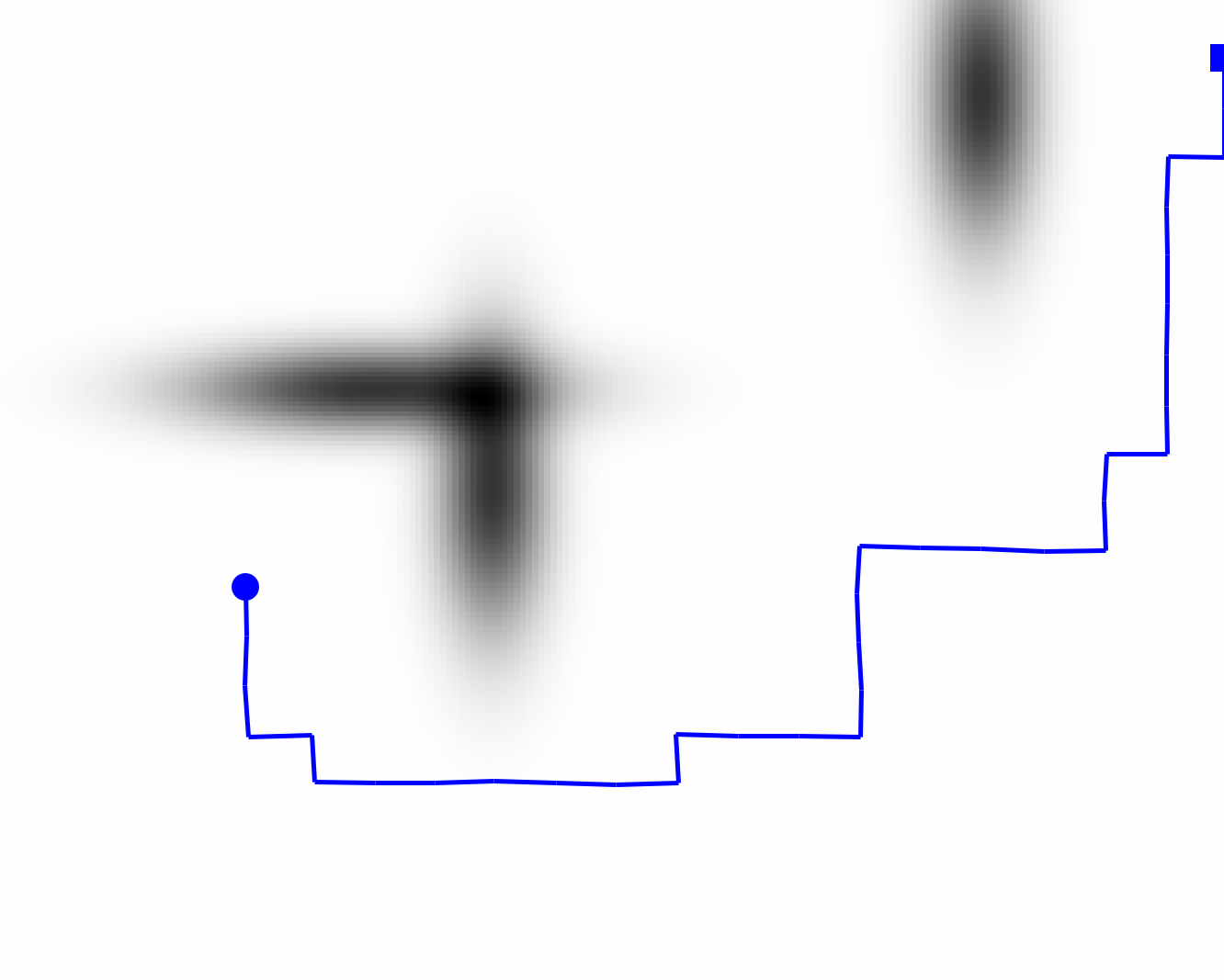} \\
\end{tabular}
\caption{Example of one trajectory for each algorithm in the continuous 2D grid world environment after 5,000 learning episodes from the behavior policy. \opac is the only algorithm that learned to reach the goal reliably.}
\label{fig:discretecontinusgridworldenv}
\end{figure}

\begin{figure*}[t]
\includegraphics[width=.33\linewidth]{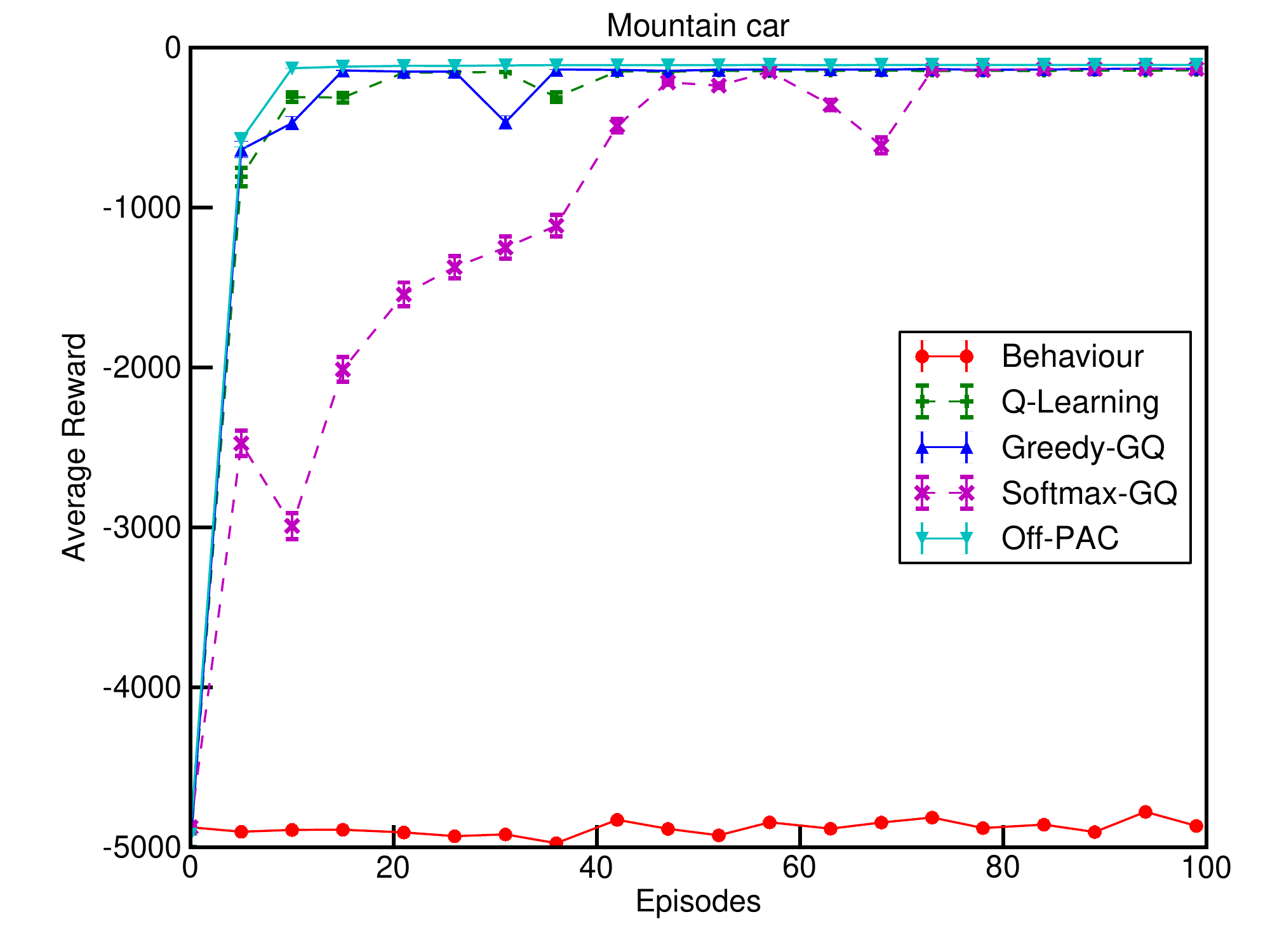}
\includegraphics[width=.33\linewidth]{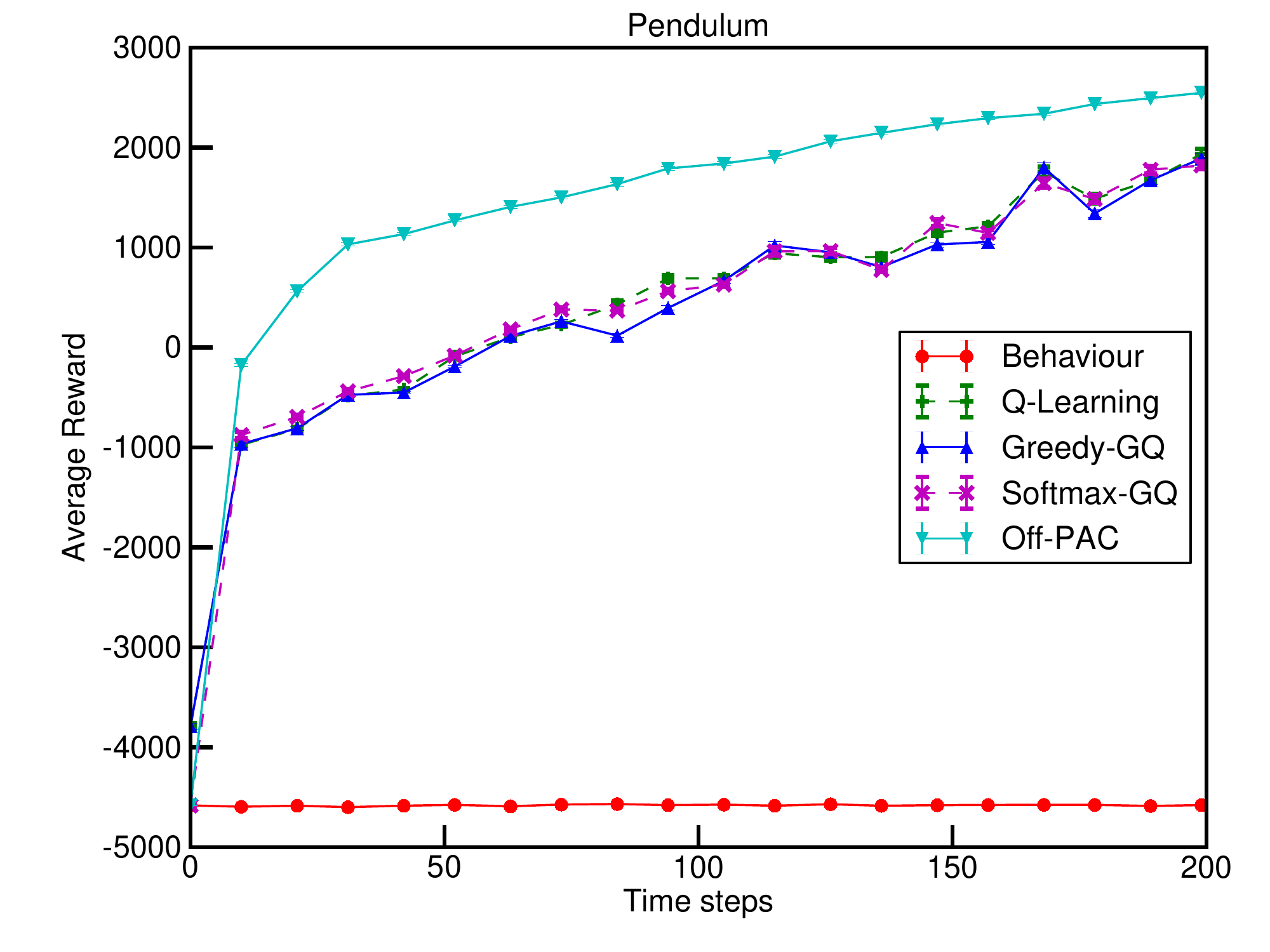}
\includegraphics[width=.33\linewidth]{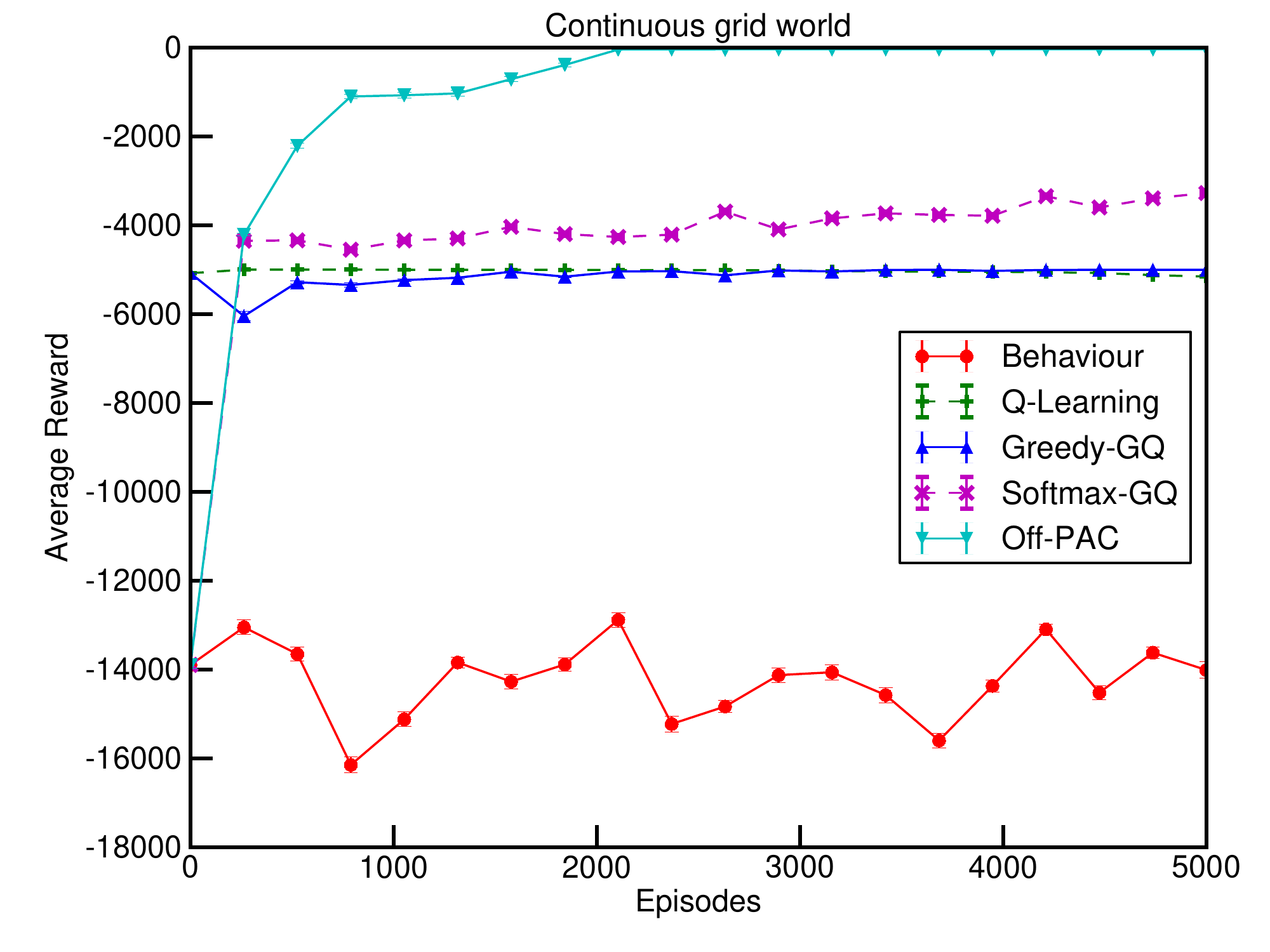}

\centering
\begin{tabular}{|@{\hspace{1pt}}l@{}l@{\hspace{1pt}}||\cl|\cl|\cl|\cl|\cl||\cl|\cl|\cl|\cl|\cl||\cl|\cl|\cl|\cl|\cl|@{}}
\multicolumn{2}{c}{} & \multicolumn{5}{c}{Mountain car}& \multicolumn{5}{c}{Pendulum}& \multicolumn{5}{c}{Continuous grid world}\\ 
\cline{3-17}
\multicolumn{2}{c|}{} & 
                                        $\alpha_w$  & $\alpha_u,\tau$ & $\alpha_v$ & $\lambda$  & \small Reward
                                      & $\alpha_w$  & $\alpha_u,\tau$ & $\alpha_v$ & $\lambda$  & \small Reward
                                      & $\alpha_w$  & $\alpha_u,\tau$ & $\alpha_v$ & $\lambda$  & \small Reward \\ 
\hline
\algoname{Behavior}:    & \asympt    & \sm{\na}    & \sm{\na}        & \sm{\na}   & \sm{\na}   & \sm{-4822\pm6}   
                                      & \sm{\na}    & \sm{\na}        & \sm{\na}   & \sm{\na}   & \sm{-4582\pm0} 
                                      & \sm{\na}    & \sm{\na}        & \sm{\na}   & \sm{\na}   & \sm{-13814\pm127} \\
                         & \overa     & \sm{\na}    & \sm{\na}        & \sm{\na}   & \sm{\na}   & \sm{-4880\pm2}   
                                      & \sm{\na}    & \sm{\na}        & \sm{\na}   & \sm{\na}   & \sm{-4580\pm.3} 
                                      & \sm{\na}    & \sm{\na}        & \sm{\na}   & \sm{\na}   & \sm{-14237\pm33} \\
\hline
\algoname{Q($\lambda$)}: & \asympt    & \sm{\na}    & \sm{\na}        & \sm{.1}    & \sm{.6}    & \sm{-143\pm.4}  
                                      & \sm{\na}    & \sm{\na}        & \sm{.5}    & \sm{.99}   & \sm{1802\pm35}
                                      & \sm{\na}    & \sm{\na}        & \sm{.0001} & \sm{0}     & \sm{-5138\pm.4} \\
                         & \overa     & \sm{\na}    & \sm{\na}        & \sm{.1}    & \sm{0}     & \sm{-442\pm4}  
                                      & \sm{\na}    & \sm{\na}        & \sm{.5}    & \sm{.99}   & \sm{376\pm15}
                                      & \sm{\na}    & \sm{\na}        & \sm{.0001} & \sm{0}     & \sm{-5034\pm.2} \\
\hline
\algoname{Greedy-GQ}:    & \asympt    & \sm{.0001}  & \sm{\na}        & \sm{.1}    & \sm{.4}    & \sm{-131.9\pm.4} 
                                      & \sm{0}      & \sm{\na}        & \sm{.5}    & \sm{.4}    & \sm{1782\pm31} 
                                      & \sm{0.05}   & \sm{\na}        & \sm{1.0}   & \sm{.2}    & \sm{-5002\pm.2} \\
                         & \overa     & \sm{.0001}  & \sm{\na}        & \sm{.1}    & \sm{.2}    & \sm{-434\pm4} 
                                      & \sm{.0001}  & \sm{\na}        & \sm{.01}   & \sm{.4}    & \sm{785\pm11} 
                                      & \sm{0}      & \sm{\na}        & \sm{.0001} & \sm{0}     & \sm{-5034\pm.2} \\
\hline
\algoname{Softmax-GQ}:   & \asympt    & \sm{.0005}  & \sm{.1}         & \sm{.1}    & \sm{.4}    & \sm{-133.4\pm.4} 
                                      & \sm{0}      & \sm{.1}         & \sm{.5}   & \sm{.4}     & \sm{1789\pm32}
                                      & \sm{.1}     & \sm{50}         & \sm{.5}    & \sm{.6}    & \sm{-3332\pm20} \\
                         & \overa     & \sm{.0001}  & \sm{.1}         & \sm{.05}   & \sm{.2}    & \sm{-470\pm7} 
                                      & \sm{.0001}  & \sm{.05}        & \sm{.005}  & \sm{.6}     & \sm{620\pm11}
                                      & \sm{.1}     & \sm{50}         & \sm{.5}    & \sm{.6}    & \sm{-4450\pm11} \\
\hline
\algoname{\opac}:         & \asympt    & \sm{.0001}  & \sm{1.0}        & \sm{.05}   & \sm{0}     & \sm{-108.6\pm.04}
                                      & \sm{.005}   & \sm{.5}         & \sm{.5}    & \sm{0}     & \sm{2521\pm17}
                                      & \sm{0}      & \sm{.001}       & \sm{.1}    & \sm{.4}    & \sm{-37\pm.01}\\
                         & \overa     & \sm{.001}   & \sm{1.0}        & \sm{.5}    & \sm{0}     & \sm{-356\pm.4}
                                      & \sm{0}      & \sm{.5}         & \sm{.5}    & \sm{0}     & \sm{1432\pm10}
                                      & \sm{0}      & \sm{.001}       & \sm{.005}  & \sm{.6}    & \sm{-1003\pm6}\\
\hline
\end{tabular}
\caption{Performance of \opac compared to the performance of Q($\lambda$), Greedy-GQ, and Softmax-GQ when learning off-policy from a random behavior policy. Final performance selected the parameters for the best performance for the last 10\% of the run, whereas the overall performance was over all the runs. The plots on the top show the learning curve for the best parameters for the final performance. \opac had always the best performance and was the only algorithm able to learn to reach the goal reliably in the continuous grid world. Performance is indicated with the standard error.}
\label{fig:results}
\end{figure*}

The feature vectors $\x_s$ were binary vectors constructed according to the standard tile-coding technique (Sutton \& Barto, 1998). For all problems, we used ten tilings, each of roughly $10\times10$ over the joint space of the two state variables, then hashed to a vector of dimension $10^6$. An addition feature was added that was always 1. State-action features, $\comp_{s,a}$, were also $10^6 + 1$ dimensional vectors constructed by also hashing the actions. We used a constant $\gamma=0.99$. All the weight vectors were initialized to 0. We performed a parameter sweep to select the following parameters: 1) the step size $\alpha_v$ for Q($\lambda$), 2) the step-sizes $\alpha_v$ and $\alpha_w$ for the two vectors in Greedy-GQ, 3) $\alpha_v$, $\alpha_w$ and the temperature $\tau$ of the target policy distribution for Softmax-GQ and 4) the step sizes $\alpha_v$, $\alpha_w$ and $\alpha_u$ for \opac. For the step sizes, the sweep was done over the following values: $\{ 10^{-4}, 5 \cdot 10^{-4}, 10^{-3}, \ldots, .5, 1. \}$ divided by 10+1=11, that is the number of tilings plus~1. To compare TD methods to gradient-TD methods, we also used $\alpha_w=0$. The temperature parameter, $\tau$, was chosen from $\{.01, .05, .1, .5, 1, 5, 10, 50, 100\}$ and $\lambda$ from $\{ 0, .2, .4, .6, .8, .99 \}$. We ran thirty runs with each setting of the parameters.

For each parameter combination, the learning algorithm updates a target policy online from the data generated by the behavior policy. For all the problems, the target policy was evaluated at 20 points in time during the run by running it 5 times on another instance of the problem. The target policy was \emph{not} updated during evaluation, ensuring that it was learned only with data from the behavior policy. 

Figure~\ref{fig:results} shows results on three problems. Softmax-GQ and \opac improved their policy compared to the behavior policy on all problems, while the improvements for Q($\lambda$) and Greedy-GQ is limited on the continuous grid world. \opac performed best on all problems. On the continuous grid world, \opac was the only algorithm able to learn a policy that reliably found the goal after 5,000 episodes (see Figure~\ref{fig:discretecontinusgridworldenv}). On all problems, \opac had the lowest standard error.

\section{Discussion} \label{sec:discussion}


\opac, like other two-timescale update algorithms, can be sensitive to parameter choices, particularly the step-sizes. \opac has four parameters: $\lambda$ and the three step sizes, $\alpha_v$ and $\alpha_w$ for the critic and $\alpha_u$ for the actor. In practice, the following procedure can be used to set these parameters. The value of $\lambda$, as with other algorithms, will depend on the problem and it is often better to start with low values (less than .4). A common heuristic is to set $\alpha_v$ to $0.1$ divided by the norm of the feature vector, $\x_s$, while keeping the value of $\alpha_w$ low. Once GTD($\lambda$) is stable learning the value function with $\alpha_u=0$, $\alpha_u$ can be increased so that the policy of the actor can be improved. This corroborates the requirements in the proof, where the step-sizes should be chosen so that the slow update (the actor) is not changing as quickly as the fast inner update to the value function weights (the critic). 

As mentioned by Borkar (2008, p.~75), another scheme that works well in practice is to use the restrictions on the step-sizes in the proof and to also subsample updates for the slow update. Subsampling updates means only updating every $\{tN, t \ge 0\}$, for some $N > 1$: the actor is fixed in-between $tN$ and $(t+1)N$ while the critic is being updated. This further slows the actor update and enables an improved value function estimate for the current policy, $\pi$. 

In this work, we did not explore incremental \textit{natural} actor-critic methods (Bhatnagar et al., 2009), which use the \textit{natural gradient} as opposed to the conventional gradient. The extension to off-policy natural actor-critic should be straightforward, involving only a small modification to the update and analysis of this new dynamical system (which will have similar properties to the original update).


Finally, as pointed out by Precup et al.~(2006), off-policy updates can be more noisy compared to on-policy learning. The results in this paper suggest that \opac is more robust to such noise because it has lower variance than the action-value based methods. Consequently, we think \opac is a promising direction for extending off-policy learning to a more general setting such as continuous action spaces.

\section{Conclusion}

This paper proposed a new algorithm for learning control off-policy, called \opac (Off-Policy Actor-Critic). We proved that \opac converges in a standard off-policy setting. We provided one of the first empirical evaluations of off-policy control with the new gradient-TD methods and showed that \opac has the best final performance  on three benchmark problems and consistently has the lowest standard error. Overall, \opac is a significant step toward robust off-policy control. 


\section{Acknowledgments}

This work was supported by MPrime, the Alberta Innovates Centre for Machine Learning, the Glenrose Rehabilitation Hospital Foundation, Alberta Innovates---Technology Futures, NSERC and the ANR MACSi project. Computational time was provided by Westgrid and the M\'{e}socentre de Calcul Intensif Aquitain. 


{\small
\subsection*{Appendix: See {\footnotesize http://arXiv.org/abs/1205.4839}}

\subsection*{References}

\parskip=2pt
\parindent=0pt
\def\hangin{\hangindent=0.15in}

\def\bibitem[#1]#2{\hangin}


\bibitem[Baird(1995)]{baird1995residual}
Baird, L. (1995).
\newblock Residual algorithms: Reinforcement learning with function
  approximation.
\newblock In \emph{Proceedings of the Twelfth International Conference on
  Machine Learning}, pp.\  30--37. Morgan Kaufmann.

\bibitem[Bhatnagar et~al.(2009)Bhatnagar, Sutton, Ghavamzadeh, and
  Lee]{bhatnagar2009natural}
Bhatnagar, S., Sutton, R.~S., Ghavamzadeh, M., Lee, M. (2009).
\newblock Natural actor-critic algorithms.
\newblock \emph{Automatica 45}(11):2471--2482.

\bibitem[Borkar(2008)]{borkar2008stochastic}
Borkar, V.~S. (2008).
\newblock \emph{Stochastic approximation: A dynamical systems viewpoint}.
\newblock Cambridge Univ Press.

\hangin
Bradtke, S.~J., Barto, A.~G. (1996).
Linear least-squares algorithms for temporal difference learning.
\emph{Machine Learning 22}:33--57.

\bibitem[Delp(2010)]{delp2010thesis}
Delp, M. (2010).
\newblock Experiments in off-policy reinforcement learning with the
  $\mbox{GQ}(\lambda)$ algorithm.
\newblock Masters thesis, University of Alberta.

\bibitem[Doya(2000)]{doya2000reinforcement}
Doya, K. (2000).
\newblock Reinforcement learning in continuous time and space.
\newblock \emph{Neural computation 12}:219--245.


\hangin
Lagoudakis, M., Parr, R. (2003).
Least squares policy iteration.
\textit{Journal of Machine Learning Research 4}:1107--1149.

\bibitem[Maei \& Sutton(2010)Maei and Sutton]{Maei-agi-10}
Maei, H.~R., Sutton, R.~S. (2010).
\newblock {GQ}($\lambda$): A general gradient algorithm for temporal-difference
  prediction learning with eligibility traces.
\newblock In \emph{Proceedings of the Third Conf. on Artificial General
  Intelligence}.

\bibitem[Maei(2011)]{maei2011gradient}
Maei, H.~R. (2011).
\newblock \emph{Gradient Temporal-Difference Learning Algorithms}.
\newblock PhD thesis, University of Alberta.

\bibitem[Maei et~al.(2009)Maei, Szepesv{\'a}ri, Bhatnagar, Precup, Silver, and
  Sutton]{maei2009nonlinear}
Maei, H.~R., Szepesv{\'a}ri, C., Bhatnagar, S., Precup, D., Silver, D., 
  Sutton, R.~S. (2009).
\newblock Convergent temporal-difference learning with arbitrary smooth
  function approximation.
\newblock \emph{Advances in Neural Information Processing Systems
  22}:1204--1212.

\bibitem[Maei et~al.(2010)Maei, Szepesv{\'a}ri, Bhatnagar, and
  Sutton]{maei2010toward}
Maei, H.~R., Szepesv{\'a}ri, C., Bhatnagar, S., Sutton, R.~S. (2010).
\newblock Toward off-policy learning control with function approximation.
\newblock \emph{Proceedings of the 27th International Conference on Machine
  Learning}.
  
\hangin
Marbach, P., Tsitsiklis, J.~N. (1998).
Simulation-based optimization of Markov reward processes.
Technical report LIDS-P-2411.

\bibitem[Pemantle(1990)]{pemantle1990nonconvergence}
Pemantle, R. (1990).
\newblock Nonconvergence to unstable points in urn models and stochastic
  approximations.
\newblock \emph{The Annals of Probability 18}(2):698--712.

\bibitem[Peters \& Schaal(2008)Peters and Schaal]{peters2008natural}
Peters, J., Schaal, S. (2008).
\newblock Natural actor-critic.
\newblock \emph{Neurocomputing 71}(7):1180--1190.

\bibitem[Precup et~al.(2006)Precup, Sutton, Paduraru, Koop, and
  Singh]{precup2006off}
Precup, D., Sutton, R.S., Paduraru, C., Koop, A., Singh, S. (2006).
\newblock Off-policy learning with recognizers.
\newblock \emph{Neural Information Processing Systems 18}.

\bibitem[Smart \& Pack~Kaelbling(2002)Smart and
  Pack~Kaelbling]{smart2002effective}
Smart, W.D., Pack~Kaelbling, L. (2002).
\newblock Effective reinforcement learning for mobile robots.
\newblock In \emph{Proceedings of International Conference on Robotics and Automation}, volume~4, pp.\  3404--3410.

\bibitem[Sutton \& Barto(1998)Sutton and Barto]{Sutton-Barto-98}
Sutton, R.~S., Barto, A.~G. (1998).
\newblock \emph{Reinforcement Learning: An Introduction}.
\newblock {MIT} Press.

\bibitem[Sutton et~al.(2000)Sutton, McAllester, Singh, and
  Mansour]{sutton2000policy}
Sutton, R.~S., McAllester, D., Singh, S., Mansour, Y. (2000).
\newblock Policy gradient methods for reinforcement learning with function
  approximation.
\newblock \emph{Advances in Neural Information Processing Systems 12}.

\bibitem[Sutton et~al.(2008)Sutton, Szepesv{\'a}ri, and
  Maei]{sutton2009convergent}
Sutton, R.~S., Szepesv{\'a}ri, {Cs}., Maei, H.~R. (2008).
\newblock A convergent ${O}(n)$ algorithm for off-policy temporal-difference
  learning with linear function approximation.
\newblock In \emph{Advances in Neural Information Processing Systems 21}, pp.\
  1609--1616.

\bibitem[Sutton et~al.(2009)Sutton, Maei, Precup, Bhatnagar, Silver,
  Szepesv{\'a}ri, and Wiewiora]{sutton2009fast}
Sutton, R.~S., Maei, H.~R., Precup, D., Bhatnagar, S., Silver, D.,
  Szepesv{\'a}ri, Cs., Wiewiora, E. (2009).
\newblock Fast gradient-descent methods for temporal-difference learning with
  linear function approximation.
\newblock In \emph{Proceedings of the 26th Annual International Conference on
  Machine Learning}, pp.\  993--1000.

\bibitem[Sutton et~al.(2011)Sutton, Modayil, Delp, Degris, Pilarski, and
  Precup]{Sutton-aamas-11}
Sutton, R.~S., Modayil, J., Delp, M., Degris, T., Pilarski, P.~M., and Precup, D. (2011).
\newblock Horde: A scalable real-time architecture for learning knowledge from
  unsupervised sensorimotor interaction.
\newblock In \emph{Proceedings of the 10th International Conference on
  Autonomous Agents and Multiagent Systems}.

\bibitem[Watkins \& Dayan(1992)Watkins and Dayan]{watkins1992q}
Watkins, C.~J.~C.~H., Dayan, P. (1992).
\newblock Q-learning.
\newblock \emph{Machine Learning 8}(3):279--292.
\newpage
}

\onecolumn
\appendix

\section{Appendix of Off-Policy Actor-Critic}

\subsection{Policy Improvement and Policy Gradient Theorems}

\begin{thmnocount}{\ref{thm:policyimprovement}}[Off-Policy Policy Improvement Theorem]\\
Given any policy parameter $\ap$, let 
\[
\ap' = \ap + \alpha \, \gradtobj
\]
Then there exists an $\epsilon>0$ such that, for all positive $\alpha<\epsilon$, 
\begin{align*}
\obj(\ap') \ge \obj(\ap)
\end{align*}
Further, if $\pi$ has a tabular representation (i.e., separate weights for each state), then
$V^{\pi_{\ap'},\gamma}(s) \ge V^{\pi_{\ap},\gamma}(s)$ for all $s \in \St$.
\end{thmnocount}

\begin{proof} 
Notice first that for any $(s,a)$, the gradient $\grad{\pi(a|s)}{\ap}$ is the direction to increase the probability of action~$a$ according to function $\pi(\cdot | s)$. For an appropriate step size $\alpha_{u,t}$ (so that the update to $\pi_{\ap'}$ increases the objective with the action-value function $Q^{\pi_{\ap},\gamma}$, fixed as the old action-value function), we can guarantee that 
\begin{align*} 
\obj(\ap) &= \sum_{s \in \St} d^{b}(s) \sum_{a \in \Ac} \pi_{\ap}(a | s) Q^{\pi_{\ap},\gamma}(s,a)\\
&\le \sum_{s \in \St} d^{b}(s) \sum_{a \in \Ac}\pi_{\ap'} (a | s) Q^{\pi_{\ap},\gamma}(s,a)
\end{align*}
Now we can proceed similarly to the Policy Improvement theorem proof provided by Sutton and Barto (1998) by extending the right-hand side using the definition of $Q^{\pi,\gamma}(s,a)$ (equation~\ref{eq:actionvalue}):
\begin{align*}
\obj(\ap_t) 
&\leq \sum_{s \in \St} d^b(s) \sum_{a \in \Ac} \pi_{\ap'}(a|s) \E{\pi_{\ap'},\gamma}{r_{t+1} + \gamma_{t+1} V^{\pi_{\ap},\gamma}(s_{t+1})}\\
& \le \sum_{s \in \St} d^b(s) \sum_{a \in \Ac} \pi_{\ap'}(a|s) \E{\pi_{\ap'},\gamma}{r_{t+1} + \gamma_{t+1} r_{t+2} + \gamma_{t+2} V^{\pi_{\ap},\gamma}(s_{t+2})}\\
& \vdots \\
& \leq \sum_{s \in \St} d^b(s) \sum_{a \in \Ac} \pi_{\ap'}(a|s) Q^{\pi_{\ap'},\gamma}(s,a)\\
& = \obj(\ap')
\end{align*}
The second part of the Theorem has similar proof to the above. With a tabular representation for $\pi$, we know that the gradient satisfies:
$$ 
\sum_{a \in \Ac} \pi_{\ap}(a|s) Q^{\pi_{\ap}, \gamma}(s,a)  \le \sum_{a \in \Ac} \pi_{\ap'}(a|s) Q^{\pi_{\ap}, \gamma}(s,a) 
$$
because the probabilities can be updated independently for each state with separate weights for each state. 

Now for any $s \in \St$:
\begin{align*}
V^{\pi_{\ap}, \gamma}(s) &= \sum_{a \in \Ac} \pi_{\ap}(a|s) Q^{\pi_{\ap}, \gamma}(s,a)\\
&\leq \sum_{a \in \Ac} \pi_{\ap'}(a|s) Q^{\pi_{\ap}, \gamma}(s,a)\\
&\leq \sum_{a \in \Ac} \pi_{\ap'}(a|s) \E{\pi_{\ap'},\gamma}{r_{t+1} + \gamma_{t+1} V^{\pi_{\ap},\gamma}(s_{t+1})}\\
& \le \sum_{a \in \Ac} \pi_{\ap'}(a|s) \E{\pi_{\ap'},\gamma}{r_{t+1} + \gamma_{t+1} r_{t+2} + \gamma_{t+2} V^{\pi_{\ap},\gamma}(s_{t+2})}\\
& \vdots \\
& \leq \sum_{a \in \Ac} \pi_{\ap'}(a|s) Q^{\pi_{\ap'},\gamma}(s,a)\\
& = V^{\pi_{\ap'}, \gamma}(s)
\end{align*}
\vspace*{-20pt}
\end{proof} 

\begin{thmnocount}{\ref{thm:policygradient}} [Off-Policy Policy Gradient Theorem]\\
Let $\conv = \{\ap \in \UU \ | \ \gradobj = 0\}$ and $\conva = \{\ap \in \UU \ | \ \gradtobj = 0\}$, which are both non-empty by Assumption \ref{req:projection}. If the value function can be represented by our function class, then 
$$ \conv \subset \conva$$ 
Moreover, if we use a tabular representation for $\pi$, then
$$ \conv = \conva$$
\end{thmnocount}
\begin{proof}
This theorem follows from our policy improvement theorem. 

Assume there exists $\ap^* \in \conv$ such that $\ap^* \notin \conva$. Then $\grad{\obj(\ap)}{\ap^*} = 0$ but $\gradt{\obj(\ap)}{\ap^*} \neq 0$. By the policy improvement theorem (Theorem \ref{thm:policyimprovement}), we know that $\obj(\ap^* + \alpha_{u,t}\gradt{\obj(\ap)}{\ap^*}) > \obj(\ap)$, for some positive $\alpha_{u,t}$. However, this is a contradiction, as the true gradient is zero. Therefore, such an $\ap^*$ cannot exist.

For the second part of the theorem, we have a tabular representation, in other words, each weight corresponds to exactly one state. Without loss of generality, assume each state $s$ is represented with $m \in \Natural$ weights, indexed by let $i_{s,1} \ldots i_{s,m}$ in the vector $\ap$. Therefore, for any state, $s$
\begin{align*}
\sum_{s' \in \St} d^b(s') \sum_{a \in \Ac} \partless{}{\ap_{i_{s,j}}} \pi_\ap(a|s') Q^{\pi_\ap,\gamma}(s',a) = d^b(s) \sum_{a \in \Ac} \partless{}{\ap_{i_{s,j}}} \pi_\ap(a|s) Q^{\pi_\ap,\gamma}(s,a)
\doteq {\bf g}_1(\ap_{i_{s,j}})
\end{align*}
Assume there exists $s \in \St$ such that ${\bf g}_1(\ap_{i_{s,j}}) = 0$ $\forall j$ but there exists $1 \le k \le m$ for ${\bf g}_2(\ap_{i_{s,k}}) \doteq \sum_{s' \in \St} d^b(s') \sum_{a \in \Ac} \pi_\ap(a|s') \partless{}{\ap_{i_{s,k}}} Q^{\pi_\ap,\gamma}(s',a)$ such that ${\bf g}_2(\ap_{i_{s,k}}) \neq 0$.
$\partless{}{\ap_{i_s}} Q^{\pi_\ap,\gamma}(s',a)$ can only increase the value of $Q^{\pi_\ap,\gamma}(s,a)$ locally (i.e., shift the probabilities of the actions to increase return), because it cannot change the value in other states ($\ap_{i_s}$ is only used for state $s$ and the remaining weights are fixed when this partial derivative is computed). 
Therefore, since ${\bf g}_2(\ap_{i_{s,k}}) \neq 0$, we must be able to increase the value of state $s$ by changing the probabilities of the actions in state $s$
$$ \implies  \sum_{j = 1}^m \sum_{a \in \Ac} \partless{}{\ap_{i_{s,j}}} \pi_\ap(a|s) Q^{\pi_\ap,\gamma}(s,a) \neq 0 $$
which is a contradiction (since we assumed ${\bf g}_1(\ap_{i_{s,j}}) = 0$ $\forall j$).

Therefore, in the tabular case, whenever $\sum_s d^b(s) \sum_a \nabla_\ap \pi_\ap(a|s) Q^{\pi_\ap,\gamma}(s,a) = 0$, then $\sum_s d^b(s) \sum_a \pi_\ap(a|s)  \nabla_\ap Q^{\pi_\ap,\gamma}(s,a) = 0$, implying that $\conva \subset \conv$. Since we already know that $\conv \subset \conva$, then we can conclude that for a tabular representation for $\pi$, $\conv = \conva$.
\end{proof}

%


\newcommand{\Esa}[1]{\Eb{s_t=s, a_t=a}{#1}}
\newcommand{\Esas}[1]{\Eb{s_t, a_t, s_{t+1}}{#1}}
\newcommand{\tde}{\delta}
 \allowdisplaybreaks[1]

\subsection{Forward/Backward view analysis}
\label{sec:forwardbackwardanalysis}

In this section, we prove the key relationship between the forward and backward views:
\setcounter{equation}{5}
\begin{equation}
\label{eq:forwardbackward}
\Eb{}{\rho(s_t,a_t) \comp(s_t,a_t) \left(\Gret_t - \vfa(s_t)\right)} = \Eb{}{\delta_t \e_{t}} 
\end{equation}
where, in these expectations, and in all the expectations in this section, the random variables (indexed by time step) are from their stationary distributions under the behavior policy. We assume that the behavior policy is stationary and that the Markov chain is aperiodic and irreducible (i.e., that we have reached the limiting distribution, $d^b$, over $s \in \St$). Note that under these definitions:
\begin{equation}
\label{eq:shifts}
\Eb{}{X_t} = \Eb{}{X_{t+k}}
\end{equation}
for all integer $k$ and for all random variables $X_t$ and $X_{t+k}$ that are simple temporal shifts of each other.
To simplify the notation in this section, we define $\rho_t=\rho(s_t,a_t)$, $\comp_t = \comp(s_t,a_t)$, $\gamma_t = \gamma(s_t)$, and $\deltaret_t=\Gret_t - \vfa(s_t)$.

\begin{proof}
First we note that $\deltaret_t$, which might be called the forward-view TD error, can be written recursively:
\begin{align}
\label{eq:deltarec}
\deltaret_t &= \Gret_t - \vfa(s_t) \nonumber\\
&=  r_{t+1} + (1 - \lambda) \gamma_{t+1} \vfa(s_{t+1}) + \lambda \gamma_{t+1} \rho_{t+1} \Gret_{t+1} - \vfa(s_t) \nonumber\\
&=  r_{t+1} + \gamma_{t+1} \vfa(s_{t+1}) - \lambda \gamma_{t+1} \vfa(s_{t+1}) + \lambda \gamma_{t+1} \rho_{t+1} \Gret_{t+1} - \vfa(s_t)  \nonumber\\
&=  r_{t+1} + \gamma_{t+1} \vfa(s_{t+1}) - \vfa(s_t) + \lambda \gamma_{t+1} \left( \rho_{t+1} \Gret_{t+1} - \vfa(s_{t+1}) \right) \nonumber\\
&=  \tde_t + \lambda \gamma_{t+1} \left( \rho_{t+1} \Gret_{t+1} - \rho_{t+1} \vfa(s_{t+1}) - (1 - \rho_{t+1}) \vfa(s_{t+1}) \right)  \nonumber\\
&=  \tde_t + \lambda \gamma_{t+1} \left( \rho_{t+1} \deltaret_{t+1} - (1 - \rho_{t+1}) \vfa(s_{t+1}) \right) 
\end{align}
where $\tde_t=r_{t+1} + \gamma_{t+1} \vfa(s_{t+1}) - \vfa(s_t)$ is the conventional one-step TD error.

Second, we note that the following expectation is zero:
\begin{align}
\label{eq:Ezero}
&\Eb{}{\rho_t \comp_t \gamma_{t+1} (1-\rho_{t+1}) \vfa(s_{t+1})} \nonumber\\
&\hspace{3cm}= \sum_s d^b(s) \sum_a b(a|s) \rho(s,a) \comp(s,a) \sum_{s'} P(s'|s,a){\gamma(s') \left(1-\sum_{a'}b(a'|s')\rho(s',a')\right) \vfa(s')} \nonumber\\
&\hspace{3cm}= \sum_s d^b(s) \sum_a b(a|s) \rho(s,a) \comp(s,a) \sum_{s'} P(s'|s,a){\gamma(s') \left(1-\sum_{a'}b(a'|s')\frac{\pi(a'|s')}{b(a'|s')}\right) \vfa(s')} \nonumber\\
&\hspace{3cm}= \sum_s d^b(s) \sum_a b(a|s) \rho(s,a) \comp(s,a) \sum_{s'} P(s'|s,a){\gamma(s') \left(1-\sum_{a'}\pi(a'|s')\right) \vfa(s')} \nonumber\\
&\hspace{3cm}= 0
\end{align}

We are now ready to prove Equation~\ref{eq:forwardbackward} simply by repeated unrolling and rewriting of the right-hand side, using Equations \ref{eq:deltarec}, \ref{eq:Ezero}, and \ref{eq:shifts} in sequence, until the pattern becomes clear:
\begin{align*}
&\Eb{}{\rho_t \comp_t \left(\Gret_t - \vfa(s_t)\right)} \\
&= \Eb{}{\rho_t \comp_t \left( \tde_t + \lambda \gamma_{t+1} \left( \rho_{t+1} \deltaret_{t+1} - (1 - \rho_{t+1}) \vfa(s_{t+1}) \right) \right)}
\hspace{4.7cm}\mbox{(using (\ref{eq:deltarec}))}\\ 
&=\Eb{}{\rho_t \comp_t \tde_t} + \Eb{}{\rho_t \comp_t \lambda \gamma_{t+1} \rho_{t+1} \deltaret_{t+1}} - \Eb{}{\rho_t \comp_t \lambda \gamma_{t+1} (1-\rho_{t+1}) \vfa(s_{t+1})} \\
&=\Eb{}{\rho_t \comp_t \tde_t} + \Eb{}{\rho_t \comp_t \lambda \gamma_{t+1} \rho_{t+1} \deltaret_{t+1}}
\hspace{7.6cm}\mbox{(using (\ref{eq:Ezero}))}\\
&= \Eb{}{\rho_t \comp_t \tde_t} + \Eb{}{\rho_{t-1} \comp_{t-1} \lambda \gamma_t \rho_t \deltaret_t}
\hspace{7.9cm}\mbox{(using (\ref{eq:shifts}))}\\
&= \Eb{}{\rho_t \comp_t \tde_t} + \Eb{}{\rho_{t-1} \comp_{t-1} \lambda \gamma_t \rho_t \left( \tde_t + \lambda \gamma_{t+1} \left( \rho_{t+1} \deltaret_{t+1} - (1 - \rho_{t+1}) \vfa(s_{t+1}) \right) \right)}
\hspace{1.2cm}\mbox{(using (\ref{eq:deltarec}))}\\
&= \Eb{}{\rho_t \comp_t \tde_t} + \Eb{}{\rho_{t-1} \comp_{t-1} \lambda \gamma_t \rho_t \tde_t} + \Eb{}{\rho_{t-1} \comp_{t-1} \lambda \gamma_t \rho_t \lambda \gamma_{t+1} \rho_{t+1} \deltaret_{t+1} }
\hspace{2.55cm}\mbox{(using (\ref{eq:Ezero}))}\\
&= \Eb{}{\rho_t \tde_t \left( \comp_t + \lambda \gamma_t \rho_{t-1} \comp_{t-1}\right)} + \Eb{}{\lambda^2 \rho_{t-2} \comp_{t-2} \gamma_{t-1} \rho_{t-1} \gamma_t \rho_t \deltaret_t }
\hspace{3.8cm}\mbox{(using (\ref{eq:shifts}))}\\
&= \Eb{}{\rho_t \tde_t \left( \comp_t + \lambda \gamma_t \rho_{t-1} \comp_{t-1}\right)}+ \Eb{}{\lambda^2 \rho_{t-2} \comp_{t-2} \gamma_{t-1} \rho_{t-1} \gamma_t \rho_t \left( \tde_t + \lambda \gamma_{t+1} \left( \rho_{t+1} \deltaret_{t+1} - (1 - \rho_{t+1}) \vfa(s_{t+1}) \right) \right) } \\ 
&= \Eb{}{\rho_t \tde_t \left( \comp_t + \lambda \gamma_t \rho_{t-1} \comp_{t-1}\right)}+ \Eb{}{\lambda^2 \rho_{t-2} \comp_{t-2} \gamma_{t-1} \rho_{t-1} \gamma_t \rho_t \tde_t} + \Eb{}{\lambda^2 \rho_{t-2} \comp_{t-2} \gamma_{t-1} \rho_{t-1} \gamma_t \rho_t \lambda \gamma_{t+1} \rho_{t+1} \deltaret_{t+1}} \\
&= \Eb{}{\rho_t \tde_t \left( \comp_t + \lambda \gamma_t \rho_{t-1} \left(\comp_{t-1} + \lambda \gamma_{t-1} \rho_{t-2} \comp_{t-2} \right) \right)} + \Eb{}{\lambda^3 \rho_{t-3} \comp_{t-3} \gamma_{t-2} \rho_{t-2} \gamma_{t-1} \rho_{t-1} \gamma_t \rho_t \deltaret_t}\\
&\vdots\\
&= \Eb{}{\rho_t \tde_t \left( \comp_t + \lambda \gamma_t \rho_{t-1} \left(\comp_{t-1} + \lambda \gamma_{t-1} \rho_{t-2} \left( \comp_{t-2} + \lambda \gamma_{t-2} \rho_{t-3} \ldots \right) \right) \right)}\\
&=  \Eb{}{\tde_t \e_{t}}
\end{align*} 
where $\e_{t} = \rho_t \left( \comp_t + \lambda \gamma_t \e_{t-1} \right)$.
\end{proof}
 \allowdisplaybreaks[1]

\newcommand{\vfcn}{\chi}
\newcommand{\Arho}{A_\rho(\ap)}
\newcommand{\Complex}{\mathbb{C}}
\newcommand{\matn}[5]{ \[#1 = \left( \begin{array}{cc}
#2 &  #3\\
#4 & #5 \end{array} \right) \]
}
\newcommand{\mat}[4]{\left( \begin{array}{cc}
#1 &  #2\\
#3 & #4 \end{array} \right) 
}




\subsection{Convergence Proofs}

Our algorithm has the same recursive stochastic form that the two-timescale off-policy value-function algorithms have:
\begin{align*}
u_{t+1} = u_t + \alpha_t(h(u_t,z_t) + M_{t+1})\\
z_{t+1} = z_t + \alpha_t(f(u_t,z_t) + N_{t+1})\\
\end{align*}
where $x \in \Real^d, \ h: \Real^d \ra \Real^d$ is a differentiable functions, $\{\alpha_t \}_{k \ge 0}$ is a positive step-size sequence and $\{ M_t \}_{k \ge 0}$ is a noise sequence. 
Again, following the GTD($\lambda$) and GQ($\lambda$) proofs, we study the behavior of the ordinary differential equation
\begin{align*}
\dot u(t) = h(u(t),z)
\end{align*}
Since we have two updates, one for the actor and one for the critic, and those time updates are not linearly separable, we have to do a two timescale analysis (Borkar, 2008). In order to satisfy the conditions for the two-timescale analysis, we will need the following assumptions on our objective, the features and the step-sizes.  Note that it is difficult to prove the boundedness of the iterates without the projection operator we describe below, though the projection was not necessary during experiments.

\begin{enumerate}[label=(A\arabic*)] 
\item The policy function, $\pi_{(\cdot)}(a | s) : \Real^{N_\ap} \ra [0,1]$, is continuously differentiable in $\ap$, $\forall s \in \St, a \in \Ac$. 
\item The update on $\ap_t$ includes a projection operator, $\Gamma: \Real^{N_\ap} \ra \Real^{N_\ap}$ that projects any $\ap$ to a compact set $\UU = \{ \ap \ | \ q_i(\ap) \le 0, i = 1, \ldots, s\} \subset \Real ^{N_\ap}$, where $q_i(\cdot): \Real^{N_\ap} \ra \Real$ are continuously differentiable functions specifying the constraints of the compact region. For each $\ap$ on the boundary of $\UU$, the gradients of the active $q_i$ are considered to be linearly independent. Assume that the compact region, $\UU$, is large enough to contain at least one local maximum of $\obj$. \label{req:projection}
\item The behavior policy has a minimum positive weight for all actions in every state, in other words, $b(a|s) \ge b_{\text{min}}$ $\forall s \in \St, a \in \Ac$, for some $b_{\text{min}} \in (0,1]$. 
\item The sequence $(\x_t, \x_{t+1}, r_{t+1})_{t \ge 0}$ is i.i.d. and has uniformly bounded second moments.
\item For every $\ap \in \UU$ (the compact region to which $\ap$ is projected), $V^{\pi_\ap,\gamma}: S \ra \Real$ is bounded.\label{req:Alast}
\end{enumerate}



\begin{enumerate}[label=(P\arabic*)] 
\item $||\x_t||_\infty < \infty, \ \forall t$, where $\x_t \in \Real^{\Ncp}$
\item The matrices $C = E[\x_t \transp{\x_t}]$ and $A = E[\x_t(\x_t - \gamma \transp{\x_{t+1})}]$ are non-singular and uniformly bounded. $A$, $C$ and $E[r_{t+1} \x_t]$ are well-defined because the distribution of $(\x_t, \x_{t+1}, r_{t+1})$ does not depend on $t$.
\end{enumerate}

\begin{enumerate}[label=(S\arabic*)] 
\item $\alpha_{v,t}, \alpha_{w,t}, \alpha_{u,t} > 0, \ \forall t$ are deterministic such that $\sum_t \alpha_{v,t} = \sum_t \alpha_{w,t} = \sum_t \alpha_{u,t} = \infty$ and $\sum_t \alpha_{v,t}^2 < \infty$, $\sum_t \alpha_{w,t}^2 < \infty$ and $\sum_t \alpha_{u,t}^2 < \infty$ with $\frac{\alpha_{u,t}}{\alpha_{v,t}} \ra 0$.
\item Define $ H(A) \doteq (A+ \transp{A})/2$ and let $\vfcn_\text{min}(C^{-1} H(A))$ be the minimum eigenvalue of the matrix $C^{-1}H(A)$. Then $\alpha_{w,t} = \eta \alpha_{v,t}$ for some $ \eta > \max{0, -\vfcn_\text{min}(C^{-1} H(A))}$.
\end{enumerate}

\begin{thmnocount}{\ref{thm:convergence}}{(Convergence of \opac)}
Let $\lambda = 0$ and consider the \opac iterations for the critic (GTD($\lambda$), i.e., TDC with importance sampling correction) and the actor (for weights $\ap_t$). Assume that (A1)-\ref{req:Alast}, (P1)-(P2) and (S1)-(S2) hold. Then the policy weights, $\ap_t$, converge to $\convhat = \{ u \in \UU \ |\   \widehat\gradtobj = 0\}$ and the value function weights, $\cp_t$, converge to the corresponding TD-solution with probability one. 
\end{thmnocount}
\begin{proof}
We follow a similar outline to that of the two timescale analysis proof for TDC (Sutton et al., 2009). We will analyze the dynamics for our two weights, $\ap_t$, and $\transp{\comb_t} = (\transp{\wvec_t} \transp{\cp_t})$, based on our update rules. We will take $\ap_t$ as the slow timescale update and $\comb_t$ as the fast inner update. 

First, we need to rewrite our updates for $\cp, \ \wvec$ and $\ap$, amenable to a two timescale analysis:
\begin{align}
&\cp_{t+1} = \cp_t + \alpha_{v,t} \rho_t [\delta_t \x_t - \gamma \transp{\x_t} \wvec \x_t]\nonumber \\
&\wvec_{t+1} = \wvec_t + \alpha_{v,t} \eta [\rho_t \delta_t \x_t - \transp{\x_t} \wvec \x_t] \nonumber\\
&\comb_{t+1} = \comb_t + \alpha_{v,t} \rho_t [G_{\ap_t,t+1} \comb_t + q_{\ap_t,t+1}] \label{eq:comb}\\
&\ap_{t+1} = \Gamma\left(\ap_t + \alpha_{\ap,t} \delta_t \frac{\grad{\pi_t(a_t|s_t)}{\ap_t}}{b(a_t|s_t)}\right) \label{eq:ap}
\end{align}
where $\rho_t=\rho(s_t,a_t)$, $\delta_t = r_{t+1} + \gamma(s_{t+1})\vfa(s_{t+1}) - \vfa(s_t)$, $\eta=\alpha_{w,t}/\alpha_{v,t}$, 
$\transp{q_{\ap_t,t+1}} = (\eta \rho_t r_{t+1} \transp{\x_t}, \ \rho_t r_{t+1} \transp{\x_t})$, and 
\[ G_{\ap_t,t+1} = \left( \begin{array}{cc}
-\eta \x_t \transp{\x_t} &  \eta \rho_t(\ap_t) \x_t \transp{(\gamma \x_{t+1} - \x_t)}\\
-\gamma \rho_t(\ap_t) \x_{t+1}\transp{ \x_t} &  \rho_t(\ap_t) \x_t \transp{(\gamma \x_{t+1} - \x_t)} \end{array} \right).\] 

Note that $G_\ap = E[G_{\ap,t}| \ap]$ and $q_\ap = E[q_{\ap,t} |\ap]$ are well defined because we assumed that the process $(\x_t, \x_{t+1}, r_{t+1})_{t \ge 0}$ is i.i.d., $0 < \rho_t \le b_{\text{min}}^{-1}$, and we have fixed $\ap_t$. Now we can define $h$ and $f$:
\begin{align*}
h(\comb_t, \ap_t) &= G_{\ap_t} \comb_t + q_{\ap_t}\\
f(\comb_t, \ap_t) &=E\left[\delta_t  \frac{\grad{\pi_t(a_t|s_t)}{\ap_t}}{b(a_t|s_t)} | \comb_t, \ap_t \right]\\
M_{t+1} &= \left(G_{\ap_t,t+1} - G_{\ap_t}\right) \comb_t + q_{\ap_t, t+1} - q_{\ap_t}\\ 
N_{t+1} &=  \delta_t  \frac{\grad{\pi_t(a_t|s_t)}{\ap_t}}{b(a_t|s_t)} - f(\comb_t, \ap_t)
\end{align*}

We have to satisfy the following conditions from Borkar (2008, p.~p64):
\begin{enumerate}[label=(B\arabic*)] 
\item $h: \Real^{N_\ap+2N_{\cp}} \ra \Real^{2N_{\cp}}$ and $f: \Real^{N_\ap+2N_\cp} \ra \Real^{N_\ap}$ are Lipschitz.\label{req:lipfcns}
\item $\alpha_{v,t}, \ \alpha_{u,t} \ \forall t$ are deterministic and $\sum_t \alpha_{v,t} = \sum_t \alpha_{u,t}= \infty, \ \sum_t\alpha_{v,t}^2 < \infty,  \ \sum_t \alpha_{u,t}^2 < \infty, \ \frac{\alpha_{u,t}}{\alpha_{v,t}} \ra 0$ (i.e.,  the system in Equation~\ref{eq:ap} moves on a slower timescale than Equation~\ref{eq:comb}).\label{req:step} 
\item The sequences $\{M_t\}_{k \ge 0}$ and $\{N_t\}_{k \ge 0}$ are Martingale difference sequences w.r.t. the increasing $\sigma$-fields, 
$\Field_t \doteq \sigma(\comb_m, \ap_m, M_m, N_m, \ m \le n)$ (i.e., $E[M_{t+1} | \Field_t] = 0$) \label{req:martingale}
\item For some constant $K > 0$, $E[ ||M_{t+1}||^2 | \Field_t] \le K(1+||x_t||^2 + ||y_t||^2)$ and $E[ ||N_{t+1}||^2 | \Field_t] \le K(1+||x_t||^2 + ||y_t||^2)$ holds for any $k \ge 0$. \label{req:martingalebound}
\item The ODE $\dot \comb(t) = h(\comb(t), \ap)$ has a globally asymptotically stable equilibrium $\vfcn(\ap)$ where $\vfcn: \Real^{N_\ap} \ra \Real^{N_\cp}$ is a Lipschitz map.\label{req:lipmap}
\item The ODE $\dot \ap(t) = f(\vfcn(\ap(t)), \ap(t))$ has  a globally asymptotically stable equilibrium, $\ap^*$.\label{req:policyparams}
\item $\sup_t (||\comb_t|| + ||\ap_t||) < \infty$, a.s. \label{req:bounded}
\end{enumerate}
An asymptotically stable equilibrium for a dynamical system is an attracting point for which small perturbations still cause convergence back to that point. 
If we can verify these conditions, then we can use Theorem 2 by Borkar (2008)  that states that $(\comb_t, \ap_t) \ra (\vfcn(\ap^*), \ap^*)$ a.s. Note that the previous actor-critic proofs transformed the update to the negative update, assuming they were minimizing costs, $-R$, rather than maximizing and so converging to a (local) minimum. This is unnecessary because we simply need to prove we have a stable equilibrium, whether a maximum or minimum; therefore, we keep the update as in the algorithm and assume a (local) maximum.

First note that because we have a bounded function, $\pi_{(:)}(s,a) : U \ra (0,1]$, we can more simply satisfy some of the properties from Borkar (2008). Mainly, we know our policy function is Lipschitz (because it is bounded and continuously differentiable), so we know the gradient is bounded, in other words,  there exists $B_{\nabla \ap} \in \Real$ such that $||\grad{\pi(a|s)}{\ap}|| \le B_{\nabla \ap}$. 

\textbf{For  requirement \ref{req:lipfcns}}, $h$ is clearly Lipschitz because it is linear in $\comb$ and $\rho_t(\ap)$ is continuously differentiable and bounded ($\rho_t(\ap) \le b_{\text{min}}^{-1}$). $f$ is Lipschitz because it is linear in $\cp$ and $\grad{\pi(a|s)}{\ap}$ is bounded and continuously differentiable (making $\obj$ with a fixed $\qfa^{\pi,\gamma}$ continuously differentiable with a bounded derivative). 


\textbf{Requirement  \ref{req:step}} is satisfied by our assumptions.

\textbf{Requirement  \ref{req:martingale}} is satisfied by the construction of $M_t$ and $N_t$. 

\textbf{For requirement  \ref{req:martingalebound}}, we can first notice that $M_t$ satisfies the requirement because $r_{t+1}, \x_t$ and $\x_{t+1}$ have uniformly bounded second moments (which is the justification used in the TDC proof (Sutton et al., 2009) and because $0 < \rho_t \le b_{\text{min}}^{-1}$. 
\begin{align*}
E&[||M_{t+1}||^2 | \Field_t]\\
&= E[|| \left(G_{\ap_t,t} - G_{\ap_t}\right) \comb_t + (q_{\ap_t,t} - q_{\ap_t}) ||^2 | \Field_t]\\ 
&\le E[|| \left(G_{\ap_t,t} - G_{\ap_t}\right) \comb_t ||^2 + ||(q_{\ap_t,t} - q_{\ap_t}) ||^2 | \Field_t]\\
&\le E[|| c_1 \comb_t ||^2 + c_2 | \Field_t]\\
&\le K(||\comb_t||^2 + 1) \le K(||\comb_t||^2 + ||\ap_t||^2 + 1)
\end{align*}
where the second inequality is by the Cauchy Schwartz inequality, $(G_{\ap_t,t} - G_{\ap_t}) \comb_t \le c_1 | \comb_t |$ and $|| q_{\ap_t,t} - q_{\ap_t} ||^2 \le c_2$  (because  $r_{t+1}, \x_t$ and $\x_{t+1}$ have uniformly bounded second moments), with $c_1, c_2 \in \Real_+$. When then simply set $K = \max(c_1, c_2)$.

For $N_t$, since the iterates are bounded as we show below for requirement \ref{req:bounded} (giving $\sup_t ||\ap_t|| < B_\ap$ and $\sup_t ||\comb_t|| < B_\comb$ for some $B_\ap, B_\comb \in \Real$. ), we see that
\begin{align*}
E&[||N_{t+1}||^2 | \Field_t]\\
&\le E\left[ \left|\left| \delta_t  \frac{\grad{\pi_t(a_t|s_t)}{\ap_t}}{b(a_t|s_t)}  \right|\right|^2 + \left|\left| E \left[\delta_t  \frac{\grad{\pi_t(a_t|s_t)}{\ap_t}}{b(a_t|s_t)} | \comb_t, \ap_t \right]  \right|\right|^2 | \Field_t\right]\\
&\le E\left[\left|\left| \delta_t  \frac{\grad{\pi_t(a_t|s_t)}{\ap_t}}{b(a_t|s_t)}  \right|\right|^2 + E \left[\left|\left| \delta_t  \frac{\grad{\pi_t(a_t|s_t)}{\ap_t}}{b(a_t|s_t)}  \right|\right|^2 | \comb_t, \ap_t\right] | \Field_t \right]\\
&\le 2 E\left[\left|\frac{\delta_t}{b(a_t|s_t)}\right|^2 ||\grad{\pi_t(a_t|s_t)}{\ap_t}||^2  | \Field_t\right]\\
&\le \frac{2}{b_{\text{min}}^2} E\left[|\delta_t|^2 B^2_{\nabla \ap}  | \Field_t\right]\\
&\le K(||\cp_t||^2 + 1) \le K(||\comb_t||^2 + ||\ap_t||^2 + 1)
\end{align*}
for some $K \in \Real$ because $E[|\delta|^2 | \Field_t] \le c_1 (1+||\cp_t||)$ for some $c_1 \in \Real$ because $r_{t+1}, \x_t$ and $\x_{t+1}$ have uniformly bounded second moments and since $||\grad{\pi(a|s)}{\ap}|| \le B_{\nabla \ap}$ $\forall \ s \in \St, \ a \in \Ac$ (as stated above because $\pi(a|s)$ is Lipschitz continuous). 

\textbf{For requirement \ref{req:lipmap}}, we know that every policy, $\pi$, has a corresponding bounded $V^{\pi,\gamma}$ (by assumption). We need to show that for each $\ap$, there is a globally asymptotically stable equilibrium of the system, $h(\comb(t), \ap)$ (which has yet to be shown for weighted importance sampling TDC, i.e., GTD($\lambda = 0$)). To do so, we use the Hartman-Grobman Theorem, that requires us to show that $G$ has all negative eigenvalues. For readability, we show this in a separate lemma (Lemma \ref{lem:gtd} below). Using Lemma \ref{lem:gtd}, we know that there exists a function $\vfcn: \Real^{N_\ap} \ra \Real^{N_\cp}$ such that $\vfcn(\ap) = \transp{(\transp{\cp_\ap} \ \transp{\wvec_\ap})}$, where $\cp_\ap$ is the unique TD-solution value-function weights for policy $\pi$ and $\wvec_\ap$ is the corresponding expectation estimate. This function, $\vfcn$, is continuously differentiable with bounded gradient (by Lemma \ref{lem:cont} below) and is therefore a Lipschitz map. 


\textbf{For requirement \ref{req:policyparams}}, we need to prove that our update $f(\vfcn(\cdot),\cdot)$ has an asymptotically stable equilibrium. This requirement can be relaxed to a local rather than global asymptotically stable equilibrium, because we simply need convergence. Our objective function, $\obj$, is not concave because our policy function, $\pi(a|s)$ may not be concave in $\ap$. Instead, we need to prove that all (local) equilibria are asymptotically stable. 

We define a vector field operator, $\hat \Gamma: \CC(\Real^{N_\ap}) \ra \CC(\Real^{N_\ap})$ that projects any gradients leading outside the compact region, $\UU$, back into $\UU$:
\begin{align*}
\hat \Gamma(g(y)) = \lim_{h \ra 0} \frac{\Gamma(y + h g(y)) - y}{h}
\end{align*}
By our forward-backward view analysis and from the same arguments following from Lemma 3 by Bhatnagar et al. (2009), we know that the ODE $\dot \ap(t) = f(\vfcn(\ap(t)), \ap(t))$ is $\gradtobj$.
Given that we have satisfied requirements 1-5 and given our step-size conditions, using standard arguments (c.f. Lemma 6 in Bhatnagar et al., 2009), we can deduce that  $\ap_t$ converges almost surely to the set of asymptotically stable fixed points, $\conva$, of $\dot \ap = \hat \Gamma \gradtobj{}$. 



\textbf{For requirement \ref{req:bounded}}, we know that $\ap_t$ is bounded because it is always projected to $\UU$. Since $\ap$ stays in $\UU$, we know that $\cp$ stays bounded (by assumption, otherwise $V^{\pi,\gamma}$ would not be bounded) and correspondingly $\wvec(\cp)$ must stay bounded, by the same argument as by Sutton et al.~(2009). Therefore, we have that $\sup_t ||\ap_t|| < B_\ap$ and that $\sup_t ||\comb_t|| < B_\rho$ for some $B_\ap, B_\comb \in \Real$. 


\end{proof}

\begin{lemma}\label{lem:gtd}
Under assumptions (A1)-\ref{req:Alast}, (P1)-(P2) and (S1)-(S2), for any fixed set of actor weights, $\ap \in \UU$, the GTD($\lambda =0$) update for the critic weights, $\cp_t$, converge to the TD solution with probability one.   
\end{lemma}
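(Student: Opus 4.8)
The claim is that for any fixed actor weight $\ap \in \UU$, the GTD($\lambda=0$) critic iterates $\cp_t$ (equivalently the stacked vector $\comb_t$) converge with probability one to the unique TD solution. The plan is to treat this as a standard linear stochastic approximation result. With $\ap$ held fixed, the importance-sampling ratios $\rho_t(\ap)$ are bounded constants (bounded by $b_{\text{min}}^{-1}$ using (A3)), so the update in Equation~\ref{eq:comb}, namely $\comb_{t+1} = \comb_t + \alpha_{v,t}\rho_t[G_{\ap,t+1}\comb_t + q_{\ap,t+1}]$, is an affine recursion in $\comb_t$ driven by i.i.d.\ noise (by (A4)). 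I would therefore appeal to a classical linear stochastic approximation theorem (e.g.\ the ODE method of Borkar (2008) or the specialization used in the TDC proof of Sutton et al.\ (2009)), for which the governing ODE is $\dot\comb = G_\ap \comb + q_\ap$ with $G_\ap = E[G_{\ap,t}]$, $q_\ap = E[q_{\ap,t}]$.

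\textbf{Key steps.} First I would verify the step-size conditions: (S1) gives $\sum_t \alpha_{v,t} = \infty$ and $\sum_t \alpha_{v,t}^2 < \infty$, which is exactly what the linear SA theorem requires on the fast timescale. Second, I would check the noise conditions: the martingale-difference property of $M_{t+1} = (G_{\ap,t+1}-G_\ap)\comb_t + (q_{\ap,t+1}-q_\ap)$ holds by construction, and the quadratic-growth bound $E[\|M_{t+1}\|^2 \mid \Field_t] \le K(1+\|\comb_t\|^2)$ follows from (P1)--(P2) and the uniformly bounded second moments in (A4), together with boundedness of $\rho_t$ --- this is the same estimate already carried out for requirement (B4) in the proof of Theorem~\ref{thm:convergence}. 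Third, and this is the crux, I would establish that $G_\ap$ is a Hurwitz matrix (all eigenvalues have strictly negative real part), so that the ODE has the unique globally asymptotically stable equilibrium $\comb^* = -G_\ap^{-1} q_\ap$, which is precisely the TD solution $(\transp{\cp_\ap}\ \transp{\wvec_\ap})$. Given these three ingredients, the linear SA theorem delivers $\comb_t \to \comb^*$ almost surely, and reading off the $\cp$-block gives the stated convergence to the TD solution.

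\textbf{The main obstacle.} The real work is the eigenvalue/stability analysis of the block matrix $G_\ap$, i.e.\ showing it is negative definite in the appropriate sense. With importance-sampling weights, the effective transition structure is weighted by $\rho_t$, so the matrices $A_\rho = E[\rho_t \x_t \transp{(\x_t - \gamma\x_{t+1})}]$ and $C = E[\x_t\transp{\x_t}]$ replace the on-policy versions, and one must show the resulting $G_\ap$ inherits the negative-real-part spectrum. I would follow the TDC eigenvalue argument: exploit the block structure of $G_\ap$ and the choice $\eta > \max(0,-\lambda_{\text{min}}(C^{-1}H(A)))$ from (S2) to force negativity, reducing the question to positive-definiteness of a Schur-complement expression built from $C$ and $H(A_\rho)$. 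The non-singularity and uniform boundedness of $C$ and $A$ in (P2) guarantee $G_\ap$ is invertible so the equilibrium is well-defined and unique. Because this spectral computation is the heart of the matter and is somewhat delicate in the off-policy (importance-weighted) case, I expect to isolate it, and indeed the paper appears to defer it to a companion result; once it is in hand, the remaining convergence claim is routine given the machinery already assembled for Theorem~\ref{thm:convergence}.
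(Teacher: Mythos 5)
Your high-level skeleton (linear stochastic approximation for fixed $\ap$, the ODE $\dot{\comb} = G_\ap \comb + q_\ap$, martingale-difference noise with quadratic growth bounds, step sizes from (S1)) matches the paper's framing, but you have deferred exactly the step that constitutes the entire content of this lemma. You write that the spectral analysis of $G_\ap$ is delicate in the importance-weighted case and that the paper ``appears to defer it to a companion result'' --- but this lemma \emph{is} that companion result; there is nothing further downstream to defer to, so a proof must actually carry out the stability argument rather than announce it.

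Moreover, your plan for that step has a concrete mismatch that would prevent it from closing as stated: you propose to reduce stability of $G_\ap$ to positive definiteness of a Schur complement ``built from $C$ and $H(A_\rho)$'', invoking (S2). But (S2) constrains $\eta$ only relative to $C^{-1}H(A)$ with $A = E[\x_t\transp{(\x_t - \gamma\x_{t+1})}]$ the \emph{unweighted} matrix, so it says nothing a priori about the importance-weighted matrix $A_\rho$. The paper's proof consists of precisely the observation that removes this gap: because the critic features depend only on the state, the importance weights integrate out of the expected update, e.g.
\begin{align*}
E\bigl[\rho_t \x_t\transp{\x_t}\bigr] = \sum_{s,a} d^b(s)\, b(a|s)\,\frac{\pi_\ap(a|s)}{b(a|s)}\, \x_s\transp{\x_s} = \sum_s d^b(s)\,\x_s\transp{\x_s}\Bigl(\sum_a \pi_\ap(a|s)\Bigr) = C,
\end{align*}
and similarly (per the paper) for the cross term $E[\rho_t\x_{t+1}\transp{\x_t}]$, so that $A_\rho = A$ and $C_\rho = C$. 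Hence $G_\ap$ is \emph{identical} to the expected TDC update matrix, and the known TDC convergence theorem (Sutton et al., 2009; Maei, 2011) --- which already contains the Hurwitz/eigenvalue analysis under exactly the condition (S2) --- applies verbatim, delivering both the unique stable equilibrium and almost-sure convergence. Without this washout computation you are left having to redo the TDC eigenvalue analysis for $A_\rho$ while lacking any hypothesis about $A_\rho$'s spectrum, so the proposal as written has a genuine gap at its core.
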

\begin{proof}
Recall that
\[ G_{\ap,t+1} = \left( \begin{array}{cc}
-\eta \x_t \transp{\x_t} &  \eta \rho_t(\ap) \x_t \transp{(\gamma \x_{t+1} - \x_t)}\\
-\gamma \rho_t(\ap) \x_{t+1} \transp{\x_t} &  \rho_t(\ap) \x_t \transp{(\gamma \x_{t+1} - \x_t)} \end{array} \right).\] 
and $G_\ap = \E{}{G_{\ap,t}}$, meaning
\[ G_\ap = \left( \begin{array}{cc}
-\eta C &  -\eta \Arho\\
-\transp{F_\rho(\ap)} & -\Arho \end{array} \right).\] 

where $F_\rho(\ap) = \gamma \E{}{\rho_t(\ap) \x_{t+1} \transp{\x_t}}$, with $C_\rho(\ap) = \Arho - F_\rho(\ap)$. For the remainder of the proof, we will simply write $A_\rho$ and $C_\rho$, because it is clear that we have a fixed $\ap \in \UU$. 

Because $GTD(\lambda)$ is solely for value function approximation, the feature vector, $\x$, is only dependent on the state:
\begin{align*}
\E{}{\rho_t \x_t \transp{\x_t}} &=\sum_{s_t,a_t} d(s_t) b(a_t|s_t) \rho_t \x(s_t)\transp{\x_t} \\
&=\sum_{s_t,a_t} d(s_t) \pi(a_t|s_t) \x(s_t) \transp{\x_t} \\
&=\sum_{s_t} d(s_t) \x(s_t) \transp{\x_t}  \left(\sum_{a_t} \pi(a_t|s_t)\right) \\
&=\sum_{s_t}  d(s_t) \x(s_t) \transp{\x_t} =E[\x_t \transp{\x_t} ]
\end{align*}
because $\sum_{a_t} \pi(a_t|s_t)=1$. A similar argument shows that $\E{}{\rho_t \x_{t+1} \transp{\x_t}} =\E{}{\x_{t+1} \transp{\x_t}}$. Therefore, we get that $F_\rho(\ap) = \gamma E[\x \transp{\x_t} ]$ and $\Arho = E[\x_t \transp{(\gamma \x_{t+1} - \x_t)}]$. The expected value of the update, $G$, therefore, is that same as for TDC, which has been shown to converge under our assumptions (see Maei, 2011). 

\end{proof}

\begin{lemma}\label{lem:cont}
Under assumptions (A1)-\ref{req:Alast}, (P1)-(P2) and (S1)-(S2), let $\vfcn: \UU \ra \mathcal{V}$ be the map from policy weights to corresponding value function, $V^{\pi,\gamma}$, obtained from using GTD($\lambda =0$) (proven to exist by Lemma \ref{lem:gtd}). Then $\vfcn$ is continuously differentiable with a bounded gradient for all $\ap \in \UU$. 
\end{lemma}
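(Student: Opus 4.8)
The plan is to obtain an explicit closed form for $\vfcn$ and then read its smoothness off from the smoothness of its ingredients. By Lemma~\ref{lem:gtd}, for each fixed $\ap \in \UU$ the expected GTD(0) dynamics take the form $\dot\comb = G_\ap \comb + q_\ap$, and (as required there and in requirement~\ref{req:lipmap}) $G_\ap$ has all-negative, in particular nonzero, eigenvalues and is therefore invertible. Hence the unique globally asymptotically stable equilibrium is $\vfcn(\ap) = -G_\ap^{-1} q_\ap$, whose first block gives the critic weights $\cp_\ap$. Since the represented value function $\vfa = \transp{\cp}\x_s$ is a fixed linear image of $\cp_\ap$, it suffices to show that $\ap \mapsto \vfcn(\ap)$ is continuously differentiable with bounded gradient on $\UU$; differentiability of the value-function map follows immediately.

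First I would show that $\ap \mapsto G_\ap$ and $\ap \mapsto q_\ap$ are continuously differentiable. Both are finite sums over the discrete sets $\St$ and $\Ac$ of terms that depend on $\ap$ only through $\rho(s,a) = \pi_\ap(a|s)/b(a|s)$, the remaining feature and expected-reward factors being constant in $\ap$. By (A1) the map $\ap \mapsto \pi_\ap(a|s)$ is continuously differentiable, and by (A3) the denominator satisfies $b(a|s) \ge b_{\text{min}} > 0$, so each ratio is continuously differentiable with $\grad{\rho(s,a)}{\ap} = \grad{\pi_\ap(a|s)}{\ap}/b(a|s)$. A finite sum of products of such $C^1$ functions with bounded constant factors is again $C^1$, giving continuous differentiability of both $G_\ap$ and $q_\ap$. (By the reduction in Lemma~\ref{lem:gtd}, $G_\ap$ is in fact independent of $\ap$, so in that case only $q_\ap$ carries the dependence.) I would then invoke the smoothness of matrix inversion: $M \mapsto M^{-1}$ is $C^\infty$ on the invertible matrices with differential $-M^{-1}(dM)M^{-1}$, and since the image of $\ap \mapsto G_\ap$ lies in the invertible matrices for every $\ap \in \UU$, the composition $\ap \mapsto G_\ap^{-1}$ is $C^1$. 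The product rule applied to $\vfcn(\ap) = -G_\ap^{-1} q_\ap$ then yields that $\vfcn$ is continuously differentiable on $\UU$.

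It remains to bound the gradient. Here I would use that $\UU$ is compact (Assumption~\ref{req:projection}): the continuous map $\grad{\vfcn}{\ap}$ attains a maximum norm on $\UU$ and is therefore bounded. If an explicit constant is wanted, I would combine the uniform bound $||\grad{\pi_\ap(a|s)}{\ap}|| \le B_{\nabla \ap}$ (from (A1) plus compactness), the bound $b(a|s) \ge b_{\text{min}}$, and the uniformly bounded second moments of $(\x_t, \x_{t+1}, r_{t+1})$ from (A4) and (P1)--(P2) to bound $||\grad{q_\ap}{\ap}||$ and $||\grad{G_\ap}{\ap}||$, together with a uniform bound on $||G_\ap^{-1}||$.

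The main obstacle is precisely this last uniform bound on $||G_\ap^{-1}||$: Lemma~\ref{lem:gtd} supplies invertibility pointwise, but to control the derivative of $G_\ap^{-1}$ uniformly I need the smallest singular value of $G_\ap$ to be bounded away from zero over all of $\UU$. Since the singular values of $G_\ap$ are continuous functions of its continuously varying entries, the smallest one is a strictly positive continuous function on the compact set $\UU$ and hence attains a positive minimum, giving the required uniform bound (and thus a uniform Lipschitz constant that also discharges the Lipschitz-map claim in requirement~\ref{req:lipmap}). In the case where $G_\ap \equiv G$ is constant in $\ap$, as Lemma~\ref{lem:gtd} indicates, this difficulty disappears entirely and $||G^{-1}||$ is simply a fixed constant.
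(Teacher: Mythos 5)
Your proof is correct and takes essentially the same route as the paper's: both write the equilibrium in closed form as $\chi(\ap) = -G(\ap)^{-1}q(\ap)$, reduce the smoothness of $\chi$ to that of the entries of $G(\ap)$ and $q(\ap)$ (which depend on $\ap$ only through $\rho(s,a)=\pi_\ap(a|s)/b(a|s)$, smooth by (A1) and (A3)), and then use the smoothness of matrix inversion at nonsingular matrices. The differences are ones of execution, in your favor: the paper proves continuity by an $\epsilon$--$\delta$ computation for $F_\rho(\ap)$ and asserts that gradient continuity follows ``by replacing $\pi$ with $\grad{\pi}{\ap}$,'' and its boundedness claim leans on the bounded policy gradient together with ``nonsingular and bounded expectation matrices''; you instead obtain $C^1$ directly from the inverse-differential and product rules, and your boundedness argument --- continuity of the gradient on the compact set $\UU$, backed by a uniform lower bound on the smallest singular value of $G_\ap$ --- explicitly supplies the uniform control of the norm of $G_\ap^{-1}$ that the paper's sketch glosses over.
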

\begin{proof}
To show that $\vfcn$ is continuous, we use the Weierstrass definition ($\delta - \epsilon$ definition). Because $\vfcn(\ap) = -G(\ap)^{-1} q(\ap) = \comb_\ap$, which is a complicated function of $\ap$, we can luckily break it up and prove continuity about parts of it. Recall that 1) the inverse of a continuous function is continuous at every point that represents a non-singular matrix and 2) the multiplication of two continuous functions is continuous. Since $G(\ap)$ is always nonsingular, we simply need to proof that $a(\ap) \ra G(\ap)$ and $b(\ap) \ra q(\ap)$ are continuous. $G(\ap)$ is composed of several block matrices, including $C$, $F_\rho(\ap)$ and $\Arho$. We will start by showing that $\ap \ra F_\rho(\ap)$ is continuous, where $F_\rho(\ap) = -\E{b}{\eta \rho_t(\ap)\x_{t+1} \transp{\x_t}}$. The remaining entries are similar. 

Take any $s \in \St$, $a \in \Ac$, and $\ap \in \UU$. We know that $\pi(a|s): \UU \ra [0,1]$ is continuous for all $\ap \in \UU$ (by assumption). Let $\epsilon_1 = \frac{\epsilon}{\gamma |\Ac| \E{b}{\x_{t+1} \transp{\x_t}}}$ (well-defined because $\E{b}{\x_{t+1} \transp{\x_t}}$ is nonsingular). Then we know there exists a $\delta > 0$ such that for any $\ap_2 \in \UU$ with $||\ap_1 - \ap_2|| < \delta$, then $||\pi_{\ap_1}(a_t | s_t) - \pi_{\ap_2}(a_t | s_t)|| < \epsilon_1$. Now
\begin{align*}
||F_\rho(\ap_1) - F_\rho(\ap_2)|| &= \gamma || \E{}{\rho_t(\ap_1) \x_{t+1} \transp{\x_t}} - \E{}{\rho_t(\ap_2) \x_{t+1} \transp{\x_t}}||\\
&= \gamma \left|\left| \sum_{s_t, a_t} d^b(s_t) b(a_t | s_t) \frac{\pi_{\ap_1}(a_t | s_t)}{b(a_t | s_t)} \x_{t+1} \transp{\x_t} - \sum_{s_t, a_t} d^b(s_t) b(a_t | s_t) \frac{\pi_{\ap_2}(a_t | s_t)}{b(a_t | s_t)}  \x_{t+1} \transp{\x_t} \right|\right|\\
&= \gamma \left|\left| \sum_{s_t, a_t} d^b(s_t) [\pi_{\ap_1}(a_t | s_t) - \pi_{\ap_2}(a_t | s_t)] \x_{t+1} \transp{\x_t} \right|\right|\\
&< \gamma \sum_{s_t, a_t} d^b(s_t) ||\pi_{\ap_1}(a_t | s_t) - \pi_{\ap_2}(a_t | s_t)|| \x_{t+1} \transp{\x_t}\\
&< \gamma \epsilon_1 \sum_{s_t, a_t} d^b(s_t) \x_{t+1} \transp{\x_t}\\
&= \gamma \epsilon_1 |\Ac| \E{b}{\x_{t+1} \transp{\x_t}} = \epsilon
\end{align*}
Therefore, $\ap \ra F_\rho(\ap)$ is continuous. This same process can be done for $A_\rho(\ap)$ and $\E{b}{\rho_t(\ap) r_t \x_t}$ in $q(\ap)$.

Since $\ap \ra G$ and $\ap \ra q$ are continuous for all $\ap$, we know that $\vfcn(\ap) = -G(\ap)^{-1} q(\ap)$ is continuous. 

The above can also be accomplished to show that $\grad{\vfcn}{\ap}$ is continuous, simply by replacing $\pi$ with $\grad{\pi}{\ap}$ above. Finally, because our policy function is Lipschitz (because it is bounded and continuously differentiable), we know that it has a bounded gradient. As a result, the gradient of $\vfcn$ is bounded (since we have nonsingular and bounded expectation matrices), which would again follow from a similar analysis as above. 
\end{proof}

\clearpage
\section{Errata}
\label{errata}

The current theoretical results only apply to tabular representations for the
policy $\pi$ and not necessarily to function approximation for the policy. Thanks to Hamid Reza Maei for pointing out this issue. 
We are working on correcting this issue, both by re-examining the
current theoretical results and working on modifications to the algorithm. Note, however,
that even theoretical results restricted to tabular representations indicate that the algorithm has a principled
design. 

The are two mistakes in the theoretical analysis that cause us to scale back the claims.
The first problem is about the existence of stable minima for our approximate gradient.
 Because the approximate gradient
 is not the gradient of any objective function, it is not clear if any stable minima 
 exist. To justify the existence of these minima, we need to prove that when $\gradtobj = 0$, perturbations
 to $\ap$ push $\ap$ back to this minimum (rather than away from it). To do so, we will need
 to consider linearizations around the minimum using the Hartman-Grobman theorem.
 For a tabular representation, this is not a problem because each iteration strictly improves
 the value via a local change to $\pi(a | s)$ for a specific state and action,
 without aliasing causing ripple effects (as we showed in Theorem 1). 

The second error is in the proof of the claim in Theorem 2 that $\conv \subseteq \conva$. It remains true that $\conv = \conva$ for tabular representations.
In the Policy Improvement claim in Theorem 1, we can say that the policy update overall improves the policy, allotting more importance
to states more highly weighted by the \textit{stationary behavior distribution, $d^b$}. Therefore, it is still true that
\begin{align*}
\obj(\ap_t) 
&\leq \sum_{s \in \St} d^b(s) \sum_{a \in \Ac} \pi_{\ap'}(a|s) \E{\pi_{\ap'},\gamma}{r_{t+1} + \gamma_{t+1} V^{\pi_{\ap},\gamma}(s_{t+1})}
\end{align*}
Expanding this further, we get
\begin{align*}
\obj(\ap_t) 
&\leq \sum_{s \in \St} d^b(s) \sum_{a \in \Ac} \pi_{\ap'}(a|s) \sum_{s,a,s_{t+1}} P(s,a,s_{t+1})  \left[ R(s,a,s_{t+1}) + \gamma_{t+1} V^{\pi_{\ap},\gamma}(s_{t+1}) \right]\\
&= \sum_{s \in \St} d^b(s) \sum_{a \in \Ac} \pi_{\ap'}(a|s) \sum_{s,a,s_{t+1}} P(s,a,s_{t+1}) \\
& \ \ \ \ \ \ \left[ R(s,a,s_{t+1}) +  \gamma_{t+1} \sum_{a_{t+1}} \pi_\ap(a_{t+1} | s_{t+1}) \sum_{s_{t+2}} P(s_{t+1},a_{t+1},s_{t+2}) \left[R(s_{t+1},a_{t+1},s_{t+2} + \gamma_{t+2} V^{\pi_{\ap},\gamma}(s_{t+2}) \right] \right]
\end{align*}
We know that
\begin{align*}
\sum_{s_{t+1}} &d^b(s_{t+1}) \sum_{a_{t+1}} \pi_\ap(a_{t+1} | s_{t+1}) \sum_{s_{t+2}} P(s_{t+1},a_{t+1},s_{t+2}) \left[R(s_{t+1},a_{t+1},s_{t+2} + \gamma_{t+2} V^{\pi_{\ap},\gamma}(s_{t+2}) \right] \\
&\le \sum_{s_{t+1}} d^b(s_{t+1}) \sum_{a_{t+1}} \pi_{\ap'}(a_{t+1} | s_{t+1}) \sum_{s_{t+2}} P(s_{t+1},a_{t+1},s_{t+2}) \left[R(s_{t+1},a_{t+1},s_{t+2} + \gamma_{t+2} V^{\pi_{\ap},\gamma}(s_{t+2}) \right]
\end{align*}

We see that unless $P(s,a, \cdot)$ and $d^b$ are similar, then the next important inequality, in the middle of the proof of Theorem 1, with actions selected according to $\pi_{\ap'}$, \textbf{might not hold:}
\begin{align*}
&\sum_{a} \pi_{\ap'}(a | s) \sum_{s,a,s_{t+1}} P(s,a,s_{t+1}) \sum_{a_{t+1}} \pi_\ap(a_{t+1} | s_{t+1}) \sum_{s_{t+2}} P(s_{t+1},a_{t+1},s_{t+2}) \left[R(s_{t+1},a_{t+1},s_{t+2} + \gamma_{t+2} V^{\pi_{\ap},\gamma}(s_{t+2}) \right] \\
&\le \sum_a \pi_{\ap'}(a | s) \sum_{s,a,s_{t+1}} P(s,a,s_{t+1}) \sum_{a_{t+1}} \pi_{\ap'}(a_{t+1} | s_{t+1}) \sum_{s_{t+2}} P(s_{t+1},a_{t+1},s_{t+2}) \left[R(s_{t+1},a_{t+1},s_{t+2} + \gamma_{t+2} V^{\pi_{\ap},\gamma}(s_{t+2}) \right]
\end{align*}
For tabular representations,
this weighting is not relevant because the policy is updated individually for each state. As the representation becomes less local, the
weighting by $d^b$ becomes more relevant for trading off error in different states. Moreover, for $b = \pi$, i.e. on-policy, the policy improvement claim is true. As the stationary distributions induced by $b$ and $\pi$ become more different, and there is significant aliasing in the policy
function approximator, then it is unclear if updates that improve $\pi$ according to weighting states by $d^b$ improve $\pi$ according to
weighting states by $d^\pi$.

\end{document}